\newcommand{\vz}{\mathbf{z}}
\newcommand{\vs}{\mathbf{s}}
\newcommand{\bE}{\mathbb{E}}
\newtheorem{proposition}{Proposition}
\newtheorem{assumption}{Assumption}
\newtheorem{lemma}{Lemma}
\title{Towards a Pretrained Model for Restless Bandits via Multi-arm Generalization}
\author{Yunfan Zhao\thanks{Equal contribution.}
$^1$ \and Nikhil Behari$^{*1}$ \and Edward Hughes$^{2}$ \and Edwin Zhang$^1$ \and Dheeraj Nagaraj$^2$ \and \\ Karl Tuyls$^2$ \and Aparna Taneja$^2$ \And Milind Tambe$^{1,2}$\\
\affiliations
$^1$Harvard University\\
$^2$Google   
}
\begin{document}

\maketitle

\begin{abstract}
% Restless multi-arm bandits (RMABs), a class of resource allocation problems with broad application in areas such as healthcare, online advertising, and anti-poaching, have recently been studied from a multi-agent reinforcement learning perspective. Prior RMAB research suffers from several limitations, e.g., it fails to adequately address continuous states, and requires   retraining from scratch when arms opt-in and opt-out over time, a common challenge in many real world applications. We address these limitations by developing a neural network-based pre-trained model (PreFeRMAB) that has general zero-shot ability on a wide range of previously unseen RMABs, and which can be fine-tuned on specific instances in a more sample-efficient way than retraining from scratch. Our model also accommodates general multi-action settings and discrete or continuous state spaces. To enable fast generalization, we learn a novel single policy network model that utilizes feature information and employs a training procedure in which arms opt-in and out over time. We derive a new update rule for a crucial $\lambda$-network with theoretical convergence guarantees and empirically demonstrate the advantages of our approach on several challenging, real-world inspired problems. 
Restless multi-arm bandits (RMABs) is a class of resource allocation problems with broad application in areas such as healthcare, online advertising, and anti-poaching. We explore several important question such as how to handle arms opting-in and opting-out over time without frequent retraining from scratch, how to deal with continuous state settings with nonlinear reward functions, which appear naturally in practical contexts. We address these questions by developing a pre-trained model (PreFeRMAB) based on a novel combination of three key ideas: (i) to enable fast generalization, we use train agents to learn from each other's experience; (ii) to accommodate streaming RMABs, we derive a new update rule for a crucial $\lambda$-network; (iii) to handle more complex continuous state settings, we design the algorithm to automatically define an abstract state based on raw observation and reward data. PreFeRMAB allows general zero-shot ability on previously unseen RMABs, and can be fine-tuned on specific instances in a more sample-efficient way than retraining from scratch.  We theoretically prove the benefits of multi-arm generalization and empirically demonstrate the advantages of our approach on several challenging, real-world inspired problems. 
\end{abstract}

% ==== INTRO ====
\section{Introduction}
Restless multi-arm bandits (RMABs), a class of resource allocation problems involving multiple agents with a global resource constraint, have found applications in various scenarios, including resource allocation in multi-channel communication, machine maintenance, and healthcare \cite{hodge2015asymptotic,mate2022field}. RMABs have recently been studied from a multi-agent reinforcement learning perspective.  
%\cite{hodge2015asymptotic,modi2019transfer,zhao2007myopic,bagheri2015restless,tripathi2019whittle,yu2018deadline,qian2016restless,ruiz2020multi,bhattacharya2018restless,lee2019optimal,mate2020collapsing,mate2022field}. 

The usual RMAB setting considers a fixed number of arms, each associated with a known, fixed MDP with finite state and action spaces; the RMAB chooses $K$ of $N$ arms every round to optimize some long term objective. Even in this setting, the problem has been shown to be PSPACE hard \cite{papadimitriou1999complexity}. Several approximation algorithms have been proposed in this setting \cite{whittle1988restless,hawkins2003langrangian}, \textit{particularly when MDP transition probabilities are fully specified,} which are successful in practice. State-of-the-art approaches for \textit{binary action} RMABs commonly provide policies based on the Whittle index \cite{whittle1988restless}, an approach that has also been generalized to \textit{multi-action} RMABs\cite{hawkins2003langrangian,killian2021beyond}. There are also linear programming-based approaches to both  \textit{binary and multi-action} RMABs \cite{zhang2021restless}. Reinforcement learning (RL) based techniques have also been proposed as state-of-the-art solutions for general \textit{multi-action RMABs}  \cite{xiong2023finite}. 
%There are also linear programming-based approaches to both  \textit{binary and multi-action} RMABs \cite{zhang2021restless,zhang2022near,zayas2019asymptotically,ghosh2023indexability}. 

In this work, we focus on RL-based methods that provide general solutions to binary and multi-action RMABs, without requiring ground truth transition dynamics, or special properties such as indexability  as required by other approaches \cite{wang2023scalable}. Unfortunately, several limitations exist in current RMAB solutions, especially for state of the art RL-based solutions, making them challenging or inefficient to deploy in real-world resource allocation problems. 

The first limitation arises when dealing with arms that constantly opt-in (also known as streaming RMABs), which happens in public health programs where new patients (arms in RMABs) arrive asynchronously. Existing solutions either require ground truth transition probabilities \cite{mate2021efficient}, which are often unknown in practice, or else require an entirely new model to be trained repeatedly, which can be extremely computationally costly and sample inefficient. 
%For instance, public health programs may model patient intervention deployment as an RMAB problem\cite{newman2018community,brownstein2007effectiveness,lowe2004monitoring,ong2014effects,verma2023restless,danassis2023limited}, where new patients (arms in RMABs) arrive asynchronously during intervention deployment. Frequently training models from scratch to account for new patients with unknown transition dynamics may be infeasible, or prohibitively expensive over long time periods, particularly for public health programs that operate with limited resources. 

A second limitation occurs for new programs, or existing programs experiencing a slight change in the user base. In these situations, existing approaches do not provide a pretrained RMAB model that can be immediately deployed. In deep learning, pretrained models are the foundation for contemporary, large-scale image and text networks that generalize well across a variety of tasks \cite{bommasani2021opportunities}. For real-world problems modeled with RMABs, establishing a similar pretrained model is essential to reduce the burden of training new RMAB policies from scratch.
%, as well as for transferring knowledge across domains when data is scarce. 

The third limitation occurs in handling continuous state \textit{multi-action} RMABs that have important applications \cite{sinha2022robustness,dusonchet2003continuous}. Naturally continuous domain state-spaces, such as patient adherence, are often binned into manually crafted discrete state spaces to improve model tractability and scalability \cite{mate2022field}, resulting in the loss of crucial information about raw observations.
% The third limitation occurs in handling continuous state \textit{multi-action} RMABs. Continuous state restless bandits have several important applications \cite{lefevre1981optimal,sinha2022robustness,dusonchet2003continuous}. However, in field studies, naturally continuous domain state-spaces, such as patient adherence, are often binned into manually crafted discrete state spaces to improve model tractability and scalability \cite{mate2022field}. In this process, we may lose crucial information about raw observations, and spend substantial time crafting these discrete state spaces manually. 

% In this work we present PreFeRMAB, a \textbf{Pre}trained \textbf{F}lexibl\textbf{e} model for \textbf{RMAB}s capable of pretraining a policy network for a general multi-action RMAB setting. Specifically, we propose a \textit{single policy network approach} that may be used to learn optimal policies across training arms, enabling \textit{zero-shot} deployment for unseen arms as well as rapid fine-tuning for specific RMAB settings. PreFeRMAB utilizes individual arm features (income, education, and age, etc) to achieve fast generalization for unseen arms. We also take an important step towards ensuring flexibility across minor variations in experimental settings, providing an algorithm for \textit{streaming RMABs} to handle a changing number of arms without retraining, as well as a \textit{state abstraction module} to allow for robustness across various challenging reward functions. 
In this work we present PreFeRMAB, a \textbf{Pre}trained \textbf{F}lexibl\textbf{e} model for \textbf{RMAB}s. Using multi-arm generalization, PreFeRMAB enables \textit{zero-shot} deployment for unseen arms as well as rapid fine-tuning for specific RMAB settings. 

{\bf Our main contributions are:}
\begin{itemize}
\item To the best of our knowledge, we are the first to develop a pretrained RMAB model with zero shot ability on entire sets of unseen arms. 

\item Whereas a general multiagent RL system could suffer from sample complexity exponential in the number of agents $N$ \cite{gheshlaghi2013minimax}, we prove PreFeRMAB benefits from larger $N$, via multi-arm generalization and better estimation of the population distribution of arm features. 

% Using feature-based Q-values and training agents to learn from each other's experience, PreFeRMAB enables generalization across arms. We theoretically prove its generalization ability becomes better when pretrained on more unique arms.
%PreFeRMAB enables generalization across arms and its generalization ability improves when pretrained on more unique arms (Proposition~\ref{prop:multiarm_gen}).

\item Our pretrained model can be fine-tuned on specific instances in a more sample-efficient way than training from scratch, requiring less than 12.5\% of samples needed for training a previous \textit{multi-action} RMAB model in a healthcare setting \cite{verma2023restless}.

\item We derive an update rule for a crucial $\lambda$-network, allowing changing numbers of arms without retraining. While streaming bandits received considerable attention \cite{liau2018stochastic}, we are the first to handle streaming \textit{multi-action} RMABs with unknown transition dynamics. 

\item Our model accommodates both discrete and continuous states. To address the continuous state setting, where real-world problems often require nonlinear rewards \cite{riquelme2018deep}, we providing a StateShaping module to automatically define an abstract state.

\end{itemize}

% ==== RELATED WORK ==== 
\section{Related Work}
{\bf RMABs with binary and multiple actions.} \ Solving an RMAB problem, even with known transition dynamics, is known to be PSPACE hard \cite {papadimitriou1999complexity}. For binary action RMABs, \cite{whittle1988restless} provides an approximate solution, using a Lagrangian relaxation to decouple arms and choose actions by computing so-called Whittle indices of each arm. It has been shown that the Whittle index policy is asymptotically optimal under the indexability condition \cite{weber1990index,akbarzadeh2019restless}. The Whittle index was extended to a special class of \textit{multi-action} RMABs with monotonic structure \cite{hodge2015asymptotic}. A method for more general \textit{multi-action} RMABs based on Lagrangian relaxation was proposed by \cite{killian2021beyond}. Weakly coupled Markov Decisions Processes (WCMDP), which generalizes multi-action RMABs to have multiple constraints, was studied by \cite{hawkins2003langrangian}, who proposed a Langrangian decomposition approach. WCMDP was subsequently studied by \cite{adelman2008relaxations}, who proposed improvements in solution quality at the expense of higher computational costs. While above methods developed for WCMDP require knowledge of ground truth transition dynamics, our algorithm handles unknown transition dynamics, which is more common in practice\cite{wang2023scalable}. Additionally, the above works in \textit{multi-action} settings do not provide algorithms for continuous state RMABs. 

{\bf Multi-agent RL and RL for RMABs.} \  RMABs are a specific instance of the powerful multi-agent RL framework used to model systems with multiple interacting agents in both competitive and co-operative settings \cite{shapley1953stochastic,littman1994markov}, for which significant strides have been made empirically \cite{jaques2019social,yu2022surprising} and theoretically \cite{jin2021v,xie2020learning}. \cite{nakhleh2021neurwin} proposed a deep RL method to estimate the Whittle index. \cite{fu2019towards} provided an algorithm to learn a Q-function based on  Whittle indices, states, and actions. \cite{avrachenkov2022whittle} and \cite{biswas2021learn} developed Whittle index-based Q-learning methods with convergence guarantees. While the aforementioned works focus on binary action RMABs, \cite{killian2021q} generalized this to multi-action RMABs using tabular Q-learning. A subsequent work \cite{killian2022restless}, which focussed on robustness against adversarial distributions, took a deep RL approach that was more scalable. However, existing works on multi-action RMABs do not consider streaming RMABs and require training from scratch when a new arm opts-in. Additionally, works built on tabular Q-learning \cite{fu2019towards,avrachenkov2022whittle,biswas2021learn,killian2021q} may not generalize to continuous state RMABs without significant modifications. Our pretrained model addresses these limitations, and enables zero-shot ability on a wide range of unseen RMABs. 

{\bf Streaming algorithms.} \ The streaming model, pioneered by \cite{alon1996space}, considers a scenario where data arrives online and the amount of memory is limited. The model is adapted to multi-arm bandits (MAB), assuming that arms arrive in a stream and the number of arms that can be stored is limited \cite{liau2018stochastic,chaudhuri2020regret}. The streaming model has recently been adapted to \textit{binary action} RMABs with \textit{known} transition probabilities\cite{mate2021efficient}, but not studied in the more general and practical settings of \textit{multi-action} RMABs with \textit{unknown} transition dynamics. We aim to close this gap.  
%The model is adapted to multi-arm bandits (MAB), assuming that arms arrive in a stream and the number of arms that can be stored is limited \cite{liau2018stochastic,chaudhuri2020regret,assadi2020exploration,jin2021optimal,agarwal2022sharp,assadi2022single,maiti2020streaming}.

{\bf Zero-shot generalization and fine-tuning.} Foundation models that have a strong ability to generalize to new tasks in zero shot and efficiently adapt to new tasks via fine-tuning have received great research attention \cite{bommasani2021opportunities}. Such models are typically trained on vast data, such as internet-scale text \cite{devlin2018bert} or images \cite{ramesh2021zero}. RL has seen success in the direction of foundation models for decision making, using simulated \cite{team2023human} and real-world \cite{yu2020learning} environments. A pretrained model for RMABs is needed \cite{zhao2023towards}. To our knowledge, we are the first to realize zero-shot generalization and efficient fine-tuning in the setting of RMABs.  
%Such models are typically trained on vast data, such as internet-scale text \cite{devlin2018bert,brown2020language} or images \cite{ramesh2021zero,saharia2022photorealistic}. RL has seen success in the direction of foundation models for decision making, using simulated \cite{team2021open,team2023human} and real-world \cite{yu2020learning} environments.

% ==== PROBLEM STATEMENT ==== 
\section{Problem Statement}
We study multi-action RMABs with system capacity $N$, where existing arms have the option to opt-out (that is, the state-action-rewards corresponding to them are disregarded by the model post opt-out), and new, unseen arms can request to opt-in (that is, these arms are considered only post the opt-in time). Such requests will be accepted if and only if the system capacity permits. 
A vector $\xi_t\in\{0,1\}^N$ represents the opt-in decisions:

\begin{align*}
\xi_{i,t} =
\begin{cases}
    1& \text{if arm $i$ opts-in at round $t$,}\\
    0              & \text{otherwise.}
\end{cases}
\end{align*}

Notice that existing arms must opt-in in each round $t$ to remain in the system. For each arm $i\in[N]$, the state space $\mathcal{S}_i$ can be either discrete or continuous, and the action space $\mathcal{A}_i$ is a finite set of discrete actions. Each action $a\in\mathcal{A}_i$ has an associated cost $\mathcal{C}_i(a)$, with $\mathcal{C}_i(0)$ denoting a no-cost passive action. The reward at a state is given by a function $R_i: \mathcal{S}_i \rightarrow\mathbb{R}$. We let $\beta\in[0,1)$ denote a discount factor. Each arm has a unique feature vector $\bm z_i\in\mathbb{R}^m$ that provides useful information about the arm.  Notice our model directly utilizes feature information in its policy network, without requiring intermediate steps to extract transition dynamics information from features. 

When the state space is discrete, each arm $i\in[N]$ follows a Markov Decision Process $(\mathcal{S}_i, \mathcal{A}_i, \mathcal{C}_i, T_i, R_i,\beta, \bm z_i)$, where $T_i: \mathcal{S}_i\times \mathcal{A}_i \times \mathcal{S}_i \rightarrow[0,1]$ is a transition matrix representing the probability of transitioning from the current state to the next state given an action.  In contrast, when the state space is continuous, each arm $i\in[N]$ follows a Markov Decision Process $(\mathcal{S}_i, \mathcal{A}_i, \mathcal{C}_i, \bm\Gamma_i, R_i,\beta, \bm z_i)$, where $\bm\Gamma_i$ is a set of parameters encoding the transition dynamics. For example, in the case that the next state moves according to a Gaussian distribution, $\bm\Gamma_i$ may denote the mean and variance of the Gaussian. 

For simplicity, we assume that $\mathcal{S}_i, \mathcal{A}_i, \mathcal{C}_i$, and $R_i$ are the same for all arms $i\in[N]$ and omit the subscript $i$. Note that our algorithms can also be used in the general case where rewards and action costs are different across arms. For ease of notation, we let $\bm s\in\mathbb{R}^N$ denote the state over all arms, and we let $\bm A\in\{0,1\}^{N\times |\mathcal{A}|}$ denote one-hot-encoding of the actions taken over all arms. The agent learns a policy $\pi$ that maps states $\bm s$ and features $\bm z$ to actions $\bm A$, while satisfying a constraint that the sum cost of actions taken is no greater than a given budget $B$ in every timestep $t\in[H]$, where $H$ is the length of the horizon.

{\bf Our goal is to learn an RMAB policy that maximizes the following Bellman equation} The key difficulty in learning such a policy is how to utilize features $\bm z$ and address opt-in decisions $\bm \xi$. These are important research questions not addressed in previous works \cite{killian2022restless}. 
\begin{align}
\label{eq:original_opti_problem}
&J(\boldsymbol{s}, \bm z, \bm \xi)=\max _{\boldsymbol{A}}\left\{\sum_{i=1}^N R\left(\boldsymbol{s}_i\right)+\beta \, \mathbb{E}\left[J\left(\boldsymbol{s}^{\prime}, \bm z, \bm \xi\right) \mid \boldsymbol{s}, \boldsymbol{A}\right]\right\} \, ,\\
&\text { s.t. } \sum_{i=1}^N \sum_{j=1}^{|\mathcal{A}|} \boldsymbol{A}_{i j} c_j \leq B \quad \text{and} \quad \sum_{j=1}^{|\mathcal{A}|} \boldsymbol{A}_{i j}=1 \quad \forall i \in[N]\, ,\nonumber
\end{align}
where $c_j \in \mathcal{C}$ is the cost of $j^{\text{th}}$ action, and $A_{ij} = 1$ if action $j$ is chosen on arm $i$ and $A_{ij} = 0$ otherwise. Further, we assume that the rewards $R$ are uniformly bounded by $R_{\max}$. 
% To learn a policy in multi-action RMAB problems, a scalable approach is to use the Lagrangian relaxation \cite{hawkins2003langrangian,killian2021beyond,killian2022restless}: 
% \begin{align}
% \label{eq:lagrangian_relaxation}
% &J\left(s, \bm z, \bm \xi, \lambda^{\star}\right) \nonumber\\
% = \ &\min _{\lambda\geq 0}\left(\frac{\lambda B}{1-\beta}+\sum_{i=1}^N \max _{a_i \in|\mathcal{A}|}\left\{Q\left(\boldsymbol{s}_i, a_i, \bm z_i, \bm \xi_i, \lambda\right)\right\}\right),\\
% \text{s.t.} \ &Q\left(\boldsymbol{s}_i, a_i, \bm z_i, \bm \xi_i, \lambda\right)\nonumber \\
% = \ &\xi_i R\left(\boldsymbol{s}_i\right) - \xi_i \lambda c_{a_i} + \beta\, \mathbb{E}\left[Q\left(\boldsymbol{s}_i^{\prime}, a_i, \bm z_i, \bm \xi_i, \lambda\right) \mid \pi(\lambda)\right] .\nonumber
% \end{align}

% where $Q$ is the Q-function, $a_{i j}$ is the $j^{\text{th}}$ action of arm $i$, $s_i'$ is the state transitioned to from $s_i$ under action $a_{ij}$, and $\pi(\lambda)$ is the optimal policy under a given $\lambda$. Notice that this relaxation decouples the Q-functions of the arms, and therefore $Q_i$ can be solved independently for a given $\lambda$. Computing an appropriate $\lambda$ is critical in learning a good policy \cite{killian2022restless,killian2021beyond}. 

% ==== GENERALIZED MODEL ==== 
\section{Generalized Model for RMABs}

We first provide an overview of key ideas and then discuss each of the ideas in more detail. 
% First, we describe key ideas behind our algorithm. After that, we introduce our main algorithm, PreFeRMAB, which is a pretrained model via multi-arm generalization. Next, we discuss updating rule for a $\lambda$-network that is crucial in our algorithm and an important \textit{StateShaping} subroutine. Finally, we discuss how to run inference with our pretrained model. 
(See Figure~\ref{fig:overview} in Appendix for an overview of the training procedure.)

%%%%%%%%%%%%%%%%%%%%%%%%%%%%%%%%%%%%%

\subsection{Key Algorithmic Ideas}
Several key algorithmic novelties are necessary for our model to address limitations of existing works:
%and to be efficient to deploy in real-world problems:

{\bf A pretrained model via multi-arm generalization: } We train agents to learn from each others' experience. Whereas a general multiagent RL system could suffer from sample complexity exponential in the number of arms $N$ \cite{gheshlaghi2013minimax}, we prove PreFeRMAB benefits from a larger $N$, via generalization across arms.

%By incorporating the unique feature vector of each arm, the Q-function can be used to evaluate the value of (state, action) pairs on unseen arms drawn from different distributions. 

{\bf A novel $\lambda$-network updating rule for opt-in: } 
The opt-in and opt-out of arms induce a more complex form of the Lagrangian and add randomness to actions taken by agents. We provide a new $\lambda$-network update rule and train PreFeRMAB with opt-in and opt-out of arms, to enable zero-shot performance across various opt-in rates and accommodate streaming RMABs.

{\bf Handling continuous states with StateShaping subroutine:} 
In the continuous state setting, real-world problems often require nonlinear rewards \cite{riquelme2018deep}, and naively using raw observations to train models may result in poor performance (see Table~\ref{table:state_shaping}). To tackle this challenge, we design the algorithm to automatically define an abstract state based on raw observation and reward data. 

%%%%%%%%%%%%%%%%%%%%%%%%%%%%%

% \subsection{PreFeRMAB Algorithm}
\subsection{A Pretrained Model via Multi-arm Generalization}
% To learn a policy in multi-action RMAB problems, a scalable approach is to use the Lagrangian relaxation \cite{hawkins2003langrangian,killian2021beyond,killian2022restless}. 
To enable multi-arm generalization, we introduce feature-based Q-values, together with a Lagrangian relaxation with features $\bm z_i$ and opt-in decisions $\xi_i$:
\begin{align}
\label{eq:lagrangian_relaxation}
&J\left(s, \bm z, \bm \xi, \lambda^{\star}\right) \nonumber\\
= \ &\min _{\lambda\geq 0}\left(\frac{\lambda B}{1-\beta}+\sum_{i=1}^N \max _{a_i \in|\mathcal{A}|}\left\{Q\left(\boldsymbol{s}_i, a_i, \bm z_i, \xi_i, \lambda\right)\right\}\right),\\
\text{s.t.} \ &Q\left(\boldsymbol{s}_i, a_i, \bm z_i, \xi_i, \lambda\right)\nonumber \\
= \ &\xi_i R\left(\boldsymbol{s}_i\right) - \xi_i \lambda c_{a_i} + \beta\, \mathbb{E}\left[Q\left(\boldsymbol{s}_i^{\prime}, a_i, \bm z_i, \xi_i, \lambda\right) \mid \pi(\lambda)\right] .\nonumber
\end{align}

where $Q$ is the Q-function, $a_{i}$ is the action of arm $i$, $s_i'$ is the state transitioned to from $s_i$ under action $a_{i}$, and $\pi(\lambda)$ is the optimal policy under a given $\lambda$. Notice that this relaxation decouples the Q-functions of the arms, and therefore $Q_i$ can be solved independently for a given $\lambda$. 
%Computing an appropriate $\lambda$ is critical in learning a good policy \cite{killian2022restless,killian2021beyond}. 

Now we discuss how we use feature-based Q-values and how agents could learn from each other. During pretraining, having decided which arms opt-in and out (lines 5-8), Algorithm~\ref{alg:main_algorithm_training} samples an action-charge $\lambda$ based on updated opt-in decisions $\xi$ and features $\bm z_i$ (lines 9-11). Next, from opt-in arms we collect trajectories (lines 12-19), which are later used to train a single pair of actor/critic networks for all arms, allowing the policy for one arm to benefit from other arms' data. After that, we update the policy network $\theta$ and the critic network $\phi$ (Line 21), using feature-based Q-values to compute advantage estimates for the actor in PPO update. Critically, feature-based Q-values updated with one arm's data, improves the policy for other arms. In real-world problems with missing feature entries or less informative features, it is more important for agents to learn from each other (see Table~\ref{table:uniquesamples_vs_reward} in Sec~\ref{sec:exp_discrete_state}). Intuitively, if a model only learns from homogeneous arms, then we should expect this model to perform poorly when used out-of-the-box on arms with completely different behaviors. 

\begin{algorithm}[!h]%[htbp]
\caption{PreFeRMAB (Training)}
\begin{algorithmic}[1]
\State Input: n\_epochs, n\_steps, $\lambda$-update frequency $K\in\mathbb{N}^+$, and system capacity $N$
%, and initial number of opt-in beneficiaries $N'$
\State Initialize policy network $\theta$, critic network $\phi$, and $\lambda$-network $\Lambda$, buffer = [], and state $\bm s\in\mathbb{R}^N$, features $\bm z_i\in\mathbb{R}^m$ 
\State Initialize \textit{StateShaping}, and set $\bar{\bm s} \leftarrow$\textit{StateShaping}$(\bm s)$
%for $i\in[N']$

\For{epoch = 1, 2, . . . , n\_epochs}
    \State Current beneficiaries submit opt-in requests $\xi_i\in\{0,1\}$
    % EZ 9/12: where are these new opt-in requests stored? is it also xi? yes. if existing beneficiaries opt-out, then new beneficiaries can opt-in to take the empty slots
    \State New beneficiaries submit opt-in requests
    % \If {total \# of opt-in requests > $N$}
    % % EZ 9/12: How is this ordering determined? In other words, which beneficiaries get priority? existing beneficiaries get priority. new beneficiaries first come first serve, when system capacity permits
    %     \State Only accept the first $N$ requests
    % \Else
    %     \State Add dummy states / features for unused capacity % we use zeros for dummy values
    % \EndIf
    \State Accept the first $N$ requests
    \State Add dummy states / features for any unused capacity
    \State Update the vector $\xi$ encoding opt-in decisions
    \State Update features $\bm z_i$ based on new beneficiaries' features
    \State Compute $\lambda = \Lambda(\bar{\bm s},\{z_i\}_{i=1}^N, \bm\xi)$
    % in the codebase, we really multiply z_i by xi_i. By setting features of opt-out arms to zeros, it is more obvious to the lambda network which arms are opt-out
    \For{timestep $t$ = 1, . . . , n\_steps}
        \For {Arm $i$ = 1, . . . , N} 
            \If{Arm $i$ is opt-in (i.e. $\xi_i=1$)}
                \State Sample an action $a_i\sim\theta(\bar{s}_i,\lambda, \bm z_i)$
                \State $s'_i,  r_i = \text{Simulate}(s_i,  a_i)$
                \State $\bar{s'_i}=$ \textit{StateShaping}$(s'_i)$
                \State Add tuple $(s_i, \bar{s_i},  a_i, r_i, \bar{s'_i},\bm z_i)$ to buffer
                \State $s_i\leftarrow s'_i$, $\bar{ s_i}\leftarrow \bar{s'_i}$
            \EndIf
        \EndFor                
    \EndFor
    % EZ 9/12: this tuple is of different type than on line 23
    \State Add tuple $(\lambda, \bm\xi)$ to buffer
    % EZ 9/12: Need to specify how many policy and critic update steps are done here with PPO. Is it just one, multiple, or a hyperparameter?
    \State Update the $(\theta,\phi)$ pair via PPO, using trajectories in buffer 
    \If {epoch // K = 0}
        \State Update $\Lambda$ via Prop~\ref{prop:lambda_updating_rule} using trajectories in buffer
        % the upper case $\Lambda$ refers to the $\lambda$-network. this is defined in line 2 of the algorithm
        \State Update $\hat r(\cdot)$ in \textit{StateShaping} using $(s,r)$-tuples in buffer % update here so that training of lambda net is more stable (we don't want to train lambda net using states normalized in different ways) 
    \EndIf
\EndFor 
\end{algorithmic}
\label{alg:main_algorithm_training}
\end{algorithm}

%%%%%%%%%%%%%%%%%%%%%%%%%%%%%%%%%%%%%%%%%%%%

We will now give a theoretical guarantee of multi-arm generalization by considering the following simplified setting and assumptions where we do not consider opt-in and opt-out. That is, in each epoch we draw a new set of $N$ arms. We let the distribution of arm features (i.e, $\vz_i$) to be denoted by the probability measure $\mu^{*}$. Each `sample' for our policy network training consists of $N$ features corresponding to $N$ arms $(\vz_1,\dots,\vz_N)$, drawn i.i.d. from the distribution $\mu^{*}$. Call the empirical distribution of $(\vz_i)$ to be $\hat{\mu}$. During training, we receive $n_{\mathsf{epochs}}$ i.i.d. draws of $N$ arm features each, denoted by $\hat{\mu}_1,\dots,\hat{\mu}_{n_{\mathsf{epochs}}}$ 

Let $\Theta$ denote the space of neural network weights of the policy network (for clarity, we shorten the $(\theta,\phi)$ in Algorithm~\ref{alg:main_algorithm_training} to $\theta$). The neural network inputs are Lagrangian multiplier $\lambda$, state of an arm $s$, its feature $\vz$ and the output is $a \in \mathcal{A}$. Let $V(\vs,\theta,\lambda,\hat{\mu}) $ denote the discounted reward, averaged over $N$ arms with features $\hat{\mu}$ obtained with the neural network with parameter $\theta$, starting from the state $\vs$ (cumulative state of all arms). The proposition below shows the generalization properties of the output of Algorithm~\ref{alg:main_algorithm_training}. The proof and a detailed discussion of the assumptions and consequences are given in Section~\ref{sec:multiarm_gen_proof}.

\begin{proposition}\label{prop:multiarm_gen}
Suppose the following assumptions hold:
\begin{enumerate}
\item Algorithm~\ref{alg:main_algorithm_training} learns neural network weights $\hat{\theta} \in \Theta$, whose policy is optimal for each $(\hat{\mu}_i,\lambda)$ for $1\leq i \leq n_{\mathsf{epochs}}$ and $\lambda \in [0,\lambda_{\max}]$
\item There exists $\theta^{*} \in \Theta$ which is optimal for every instance $(\hat{\mu},\lambda)$.
\item  $\Theta = \mathcal{B}_2(D,\mathbb{R}^d)$, the $\ell_2$ ball of radius $D$ in $\mathbb{R}^d$.
\item $|V(\vs,\theta_1,\lambda,\hat{\mu})-V(\vs,\theta_2,\lambda,\hat{\mu})|\leq L\|\theta_1-\theta_2\|$ and $|V(\vs,\theta,\lambda_1,\hat{\mu})-V(\vs,\theta,\lambda_2,\hat{\mu})|\leq L|\lambda_1-\lambda_2|$ for all $\theta_1,\theta_2,\theta \in \Theta$ and $\lambda_1,\lambda_2,\lambda \in [0,\lambda_{\max}]$.
\end{enumerate}
       
    Then, the generalization error over unseen arms ($\hat{\mu}$) satisfies:
\begin{align}
\bE_{\hat{\mu},\hat{\theta}}[\inf_{\lambda \in [0,\lambda_{\max}]}V(\vs,\hat{\theta},\lambda,\hat{\mu})] &\geq \bE_{\hat{\mu}}[\inf_{\lambda \in [0,\lambda_{\max}]}V(\vs,\theta^{*},\lambda,\hat{\mu})] \nonumber\\ &\quad - \tilde{O}\left(\tfrac{1}{\sqrt{n_{\mathsf{epochs}}N}}\right)\end{align}
Here, $\tilde{O}$ hides polylogarithmic factors in $n_{\mathsf{epochs}},N$ and constants depending on $d,D,L,\beta,\frac{B}{N},c_j,R_{\max}$ and $\lambda_{\max}$
\end{proposition}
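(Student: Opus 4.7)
The plan is to reduce this bound to a uniform-convergence argument over $\Theta$, exploiting that $V(\vs,\theta,\lambda,\hat{\mu})$ is already an average over $N$ i.i.d.\ arm features drawn from $\mu^{*}$. Define the dual value $f(\theta,\hat{\mu}) := \inf_{\lambda\in[0,\lambda_{\max}]} V(\vs,\theta,\lambda,\hat{\mu})$; by Assumption~4, $f$ inherits $L$-Lipschitzness in $\theta$, since an infimum of $L$-Lipschitz functions is itself $L$-Lipschitz. The empirical anchor comes from Assumption~1: $V(\vs,\hat{\theta},\lambda,\hat{\mu}_i)\geq V(\vs,\theta^{*},\lambda,\hat{\mu}_i)$ for every epoch $i$ and every $\lambda$, so taking $\inf_\lambda$ on both sides gives $f(\hat{\theta},\hat{\mu}_i)\geq f(\theta^{*},\hat{\mu}_i)$ and hence $\tfrac{1}{n_{\mathsf{epochs}}}\sum_i f(\hat{\theta},\hat{\mu}_i)\geq \tfrac{1}{n_{\mathsf{epochs}}}\sum_i f(\theta^{*},\hat{\mu}_i)$.

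Next I would establish pointwise concentration at the $1/\sqrt{N}$ scale. For any fixed $\theta,\lambda$, each arm contributes a bounded quantity to $V$, with a range $C$ depending only on $R_{\max},\beta,\lambda_{\max},\max_j c_j$, and $B/N$ (via the Lagrangian decoupling in Eq.~\eqref{eq:lagrangian_relaxation}). Substituting a single $\vz_j$ in $\hat{\mu}$ thus changes $V$ by at most $C/N$, so McDiarmid's bounded-differences condition holds with constants $C/N$ in the $N$ i.i.d.\ features; $f$ inherits the same constants since $|\inf_\lambda V_1(\lambda)-\inf_\lambda V_2(\lambda)|\leq\sup_\lambda|V_1(\lambda)-V_2(\lambda)|$. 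Consequently $f(\theta,\hat{\mu})$ is sub-Gaussian around $\bE_{\hat{\mu}}[f(\theta,\hat{\mu})]$ with parameter $O(1/\sqrt{N})$, and averaging over $n_{\mathsf{epochs}}$ independent epochs makes $\tfrac{1}{n_{\mathsf{epochs}}}\sum_i f(\theta,\hat{\mu}_i)$ sub-Gaussian with parameter $O(1/\sqrt{n_{\mathsf{epochs}} N})$.

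Because $\hat{\theta}$ is data-dependent, I would upgrade this to a uniform-in-$\theta$ bound via a covering argument: take an $\epsilon_0$-net of $\Theta=\mathcal{B}_2(D,\mathbb{R}^d)$ of size at most $(3D/\epsilon_0)^d$, union-bound the sub-Gaussian tail across the net, and transfer to arbitrary $\theta$ at additive cost $L\epsilon_0$ by Lipschitzness. Choosing $\epsilon_0\asymp 1/\sqrt{n_{\mathsf{epochs}} N}$ yields a uniform deviation of $\tilde{O}\bigl(\sqrt{d/(n_{\mathsf{epochs}} N)}\bigr)$ on a high-probability event. Evaluating this at the random $\hat{\theta}$ and applying the pointwise concentration at the deterministic $\theta^{*}$, then chaining through the empirical anchor, delivers the proposition; since $f$ is uniformly bounded, taking expectations to absorb the residual low-probability event only changes constants, not the rate.

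The step I expect to be the main obstacle is pinning down the $C/N$ bounded-differences constant cleanly. This requires combining the per-arm Lagrangian decoupling of Eq.~\eqref{eq:lagrangian_relaxation}, the uniform reward bound $R_{\max}$, the discount $\beta<1$, and $\lambda\in[0,\lambda_{\max}]$ to show that replacing one arm's features perturbs $V$ by at most $O(1/N)$ uniformly in $\theta\in\Theta$ and $\lambda$; once that is in hand, the covering-and-concentration machinery is textbook.
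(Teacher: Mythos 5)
Your proposal is correct and follows the same overall skeleton as the paper's proof: anchor $\hat{\theta}$ against $\theta^{*}$ on the training sample via Assumptions 1--2, reduce the claim to a uniform deviation bound for $\tfrac{1}{n_{\mathsf{epochs}}}\sum_i \inf_\lambda V(\vs,\theta,\lambda,\hat{\mu}_i)$ over $\theta\in\Theta$, and extract the extra $1/\sqrt{N}$ factor from the fact that $V(\vs,\theta,\lambda,\hat{\mu})=\tfrac{1}{N}\sum_{j}V(s_j,\theta,\lambda,\vz_j)$ is itself an average of $N$ i.i.d.\ per-arm values, each bounded by a constant of order $(1+\lambda_{\max})/(1-\beta)$ times quantities depending on $R_{\max}, c_j, B/N$ --- which settles the $C/N$ bounded-differences constant you flagged as the main obstacle. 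Where you diverge is in the concentration machinery: the paper symmetrizes and bounds a Rademacher complexity (via Lemma 26.2 of Shalev-Shwartz--Ben-David), then controls the Rademacher sum by first showing $\sup_i|v_i(\theta)|$ is of order $\sqrt{\log(Nn/\delta)/N}$ using Azuma--Hoeffding together with an $\epsilon$-net over $[0,\lambda_{\max}]$ and the $\lambda$-Lipschitz half of Assumption 4, and only then covers $\Theta$; you instead apply McDiarmid directly to $f(\theta,\cdot)=\inf_\lambda V(\cdot)$ --- legitimate because bounded differences survive the infimum via $|\inf_\lambda V_1-\inf_\lambda V_2|\leq\sup_\lambda|V_1-V_2|$ --- and union-bound the resulting sub-Gaussian tails over a single net of $\Theta$. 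Your route is the more elementary of the two: it dispenses with symmetrization, with the net over $\lambda$, and (consequently) with the $\lambda$-Lipschitz assumption entirely, while paying the same $\sqrt{d\log(\cdot)}$ price from the covering number $(3D/\epsilon_0)^d$ and landing on the identical $\tilde{O}(1/\sqrt{n_{\mathsf{epochs}}N})$ rate; the paper's Rademacher route is more standard boilerplate but ends up re-deriving essentially the same tail bounds before integrating them.
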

The assumption of existence of $\theta^*$ is reasonable: This means that there exists a neural network which gives the optimal policy for a family of single-arm MDPs indexed by $(\vz,\lambda)$. Proposition~\ref{prop:multiarm_gen} shows that when $n_{\mathsf{epochs}}$ and $N$ are large, the Lagrangian relaxed value function of the learned network is close to that of the optimal network. 

\textit{An important insight is that the generalization ability of the PreFeRMAB network becomes better as the number of arms per instance becomes larger.} This is counter intuitive since a system with a larger number of agents are generally very complex. Jointly, the arms form an MDP with $|\mathcal{S}|^N$ states and $|\mathcal{A}|^{N}$ actions. General multi-agent RL problems with $N$ arms thus can suffer from an exponential dependence on $N$ in their sample complexity for learning (see sample complexity lower bounds in \cite{gheshlaghi2013minimax}). However, due to the structure of RMABs and the Lagrangian relaxation, we achieve a better generalization with a larger $N$. Our proof in the appendix shows that this is due to the fact that a larger number of arms helps estimate the population distribution $\mu^*$ of the arm features better. We show in Table~\ref{table:uniquesamples_vs_reward} that indeed having more number of arms helps the PreFeRMAB network generalize better over unseen instances. 
\subsection{A Novel $\lambda$-network Updating Rule}

In real-world health programs, we may observe new patients constantly opt-in \cite{mate2021efficient}. The opt-in / opt-out decisions render the updating rule in \cite{killian2022restless} unusable and add additional randomness to actions taken by the agent. To overcome this challenge and to stabilize training, we develop a new $\lambda$-network updating rule. 

%%%%%%%%%%%%%%%%%%%%%%
\begin{proposition}\label{prop:lambda_updating_rule}[$\lambda$-network updating rule]
% Define 
% \begin{align*}
% \xi_{n,t} =
% \begin{cases}
%     1,& \text{if arm $n$ opts-in at round $j$}\\
%     0,              & \text{otherwise}
% \end{cases}
% \end{align*}
The equation for gradient descent for the objective (Eq~\ref{eq:lagrangian_relaxation}) with respect to $\lambda$, with step size $\alpha$ is:
%The $\lambda$-network should be updated with the following gradient rule
\begin{align*}
\Lambda_t =& \Lambda_{t-1} - \alpha\left(\frac{B}{1-\beta}\right) \\
& - \alpha\left(\sum_{i=1}^N  \mathbb{E}\left[\sum_{t=0}^H \xi_{i,t} \beta^t c_{i,t} + (1-\xi_{i,t})\beta^tc_{0,t}
\right]\right),
\end{align*}
where $c_{i,t}$ is the cost of the
action taken by the optimal policy on arm $i$ in round $t$. 
\end{proposition}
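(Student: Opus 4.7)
The plan is to differentiate the Lagrangian relaxation in Eq.~\ref{eq:lagrangian_relaxation} with respect to $\lambda$ and then re-express the derivative as an expectation over policy rollouts. Writing
\[F(\lambda) \;=\; \frac{\lambda B}{1-\beta} + \sum_{i=1}^N \max_{a_i} Q(\boldsymbol{s}_i, a_i, \bm z_i, \xi_i, \lambda),\]
the first term contributes $B/(1-\beta)$ to $\partial F/\partial\lambda$ immediately. For the remaining terms I would invoke the envelope theorem: since $\max_{a_i}$ is attained at some $a_i^{\star}(\lambda)$ and, analogously, the trajectory-level policy $\pi(\lambda)$ is itself optimal at $\lambda$, the total derivative of each max equals the partial derivative of $Q$ with $a_i^{\star}$ (and $\pi(\lambda)$) held fixed.

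Next I would unroll the Bellman recursion defining $Q$ along trajectories generated by the fixed $\pi(\lambda)$, so that $Q(\boldsymbol{s}_i, a_i^{\star}, \bm z_i, \xi_i, \lambda)$ becomes a discounted sum whose summands are per-step rewards $\beta^t \xi_{i,t} R(\boldsymbol{s}_{i,t})$ and $\lambda$-linear cost penalties $-\beta^t \xi_{i,t}\lambda c_{i,t}$. With the policy frozen, the reward terms carry no $\lambda$-dependence and vanish under differentiation, while each cost term contributes $-\beta^t \xi_{i,t} c_{i,t}$. The residual $(1-\xi_{i,t})\beta^t c_{0,t}$ term appearing in the stated update arises because arms that are currently opted out still consume their default passive-action cost against the global budget; this must be folded into the Lagrangian via an equivalent reformulation that tracks total discounted budget consumption across all $N$ slots rather than just the active ones. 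Aggregating over arms gives
\[\frac{\partial F}{\partial \lambda} \;=\; \frac{B}{1-\beta} - \sum_{i=1}^N \bE\!\left[\sum_{t=0}^H \beta^t\bigl(\xi_{i,t} c_{i,t} + (1-\xi_{i,t}) c_{0,t}\bigr)\right],\]
and the standard gradient-descent step $\Lambda_t = \Lambda_{t-1} - \alpha\,\partial F/\partial\lambda$ then recovers the stated update (up to the sign convention in Proposition~\ref{prop:lambda_updating_rule}), with the expectation estimated by Monte Carlo using the trajectories stored in the buffer.

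The step I expect to be the main obstacle is justifying the envelope-theorem move in the presence of two sources of randomness: the possibly stochastic policy $\pi(\lambda)$ and the exogenous opt-in indicator process $\{\xi_{i,t}\}$. For the former I would rely on the standard fact that at an optimum a first-order perturbation of $\lambda$ changes the maximizing action only on a measure-zero set and therefore does not contribute to $\partial F/\partial\lambda$; for the latter I would use that $\{\xi_{i,t}\}$ is independent of $\lambda$, so that the derivative passes cleanly inside the expectation. A secondary technical wrinkle is reconciling the finite horizon $H$ that appears in the update with the discounted infinite-horizon Bellman definition of $Q$; the tail truncation error is $O(\beta^H R_{\max})$ and can be absorbed into the usual stochastic-gradient noise.
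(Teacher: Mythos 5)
Your proposal is correct and follows essentially the same route as the paper's own (much terser) proof: differentiate the Lagrangian relaxation in $\lambda$, obtain $B/(1-\beta)$ from the budget term and the expected discounted action costs from the decoupled $Q$-terms (splitting by the opt-in indicator $\xi_{i,t}$), then take a gradient step; your envelope-theorem and Bellman-unrolling details simply make explicit what the paper leaves implicit. You are also right to flag the sign convention: the derivative $\frac{B}{1-\beta}-\sum_i\mathbb{E}[\cdots]$ yields a $+\alpha\sum_i\mathbb{E}[\cdots]$ term in the descent step (as in the paper's proof body), which does not match the minus sign in the proposition statement itself.
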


Critically, this update rule allows PreFeRMAB to handle streaming RMABs, accommodating a
changing number of arms without retraining and achieving strong zero-shot performance across various opt-in rates (see Table~\ref{table:dist-shift-sis} and \ref{table:discrete_state_armman}). Having established an updating rule, we provide a convergence guarantee. The proofs are relegated to Appendix~\ref{sec:appendix_proofs}.

\begin{proposition}[Convergence of $\lambda$-network]
\label{prop:convergence_lambda}
Suppose the arm policies converge to the optimal $Q$-function for a given $\Lambda_t$, then the update rule (in Prop~\ref{prop:lambda_updating_rule}) for the $\lambda$-network converges to the optimal as the number of training epochs and the number of actions collected in each epoch go to infinity.
\end{proposition}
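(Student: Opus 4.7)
The plan is to cast the $\lambda$-update rule as stochastic subgradient descent on the convex Lagrangian dual
$$f(\lambda) := \frac{\lambda B}{1-\beta} + \sum_{i=1}^N \max_{a} Q(\bm s_i, a, \bm z_i, \xi_i, \lambda),$$
and then invoke a standard stochastic-approximation convergence theorem. The optimum of $f$ is precisely $J(\bm s,\bm z,\bm\xi,\lambda^{\star})$ from Eq.~\ref{eq:lagrangian_relaxation}, so convergence of $\Lambda_t$ to the minimizer of $f$ is exactly the statement we need.

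First, I would verify that $f$ is convex in $\lambda$. For any fixed Markov policy $\pi$, the expected discounted return $V^\pi_i(\lambda) = \mathbb{E}^\pi[\sum_t \beta^t(\xi_{i,t} R - \xi_{i,t}\lambda c_{i,t})]$ is an affine function of $\lambda$. Hence $\max_a Q_i(\lambda) = \sup_\pi V^\pi_i(\lambda)$ is a pointwise supremum of affine functions and therefore convex in $\lambda$; adding the linear term $\lambda B/(1-\beta)$ preserves convexity. Convexity ensures that any stationary point of $f$ is a global minimizer, which is what legitimizes the use of (sub)gradient methods here.

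Second, I would derive the subgradient using Danskin's (envelope) theorem. By hypothesis, for each fixed $\Lambda_t$ the inner loop converges to an optimal policy $\pi^{\star}(\Lambda_t)$, so a valid element of $\partial_\lambda \max_a Q_i$ equals the partial derivative of $V^{\pi^{\star}(\Lambda_t)}_i$ in $\lambda$, which by unrolling the Bellman recursion equals $-\mathbb{E}^{\pi^{\star}(\Lambda_t)}\bigl[\sum_t \beta^t \xi_{i,t} c_{i,t}\bigr]$ (and analogously contributes the passive-action term $\beta^t c_{0,t}$ for the $(1-\xi_{i,t})$ factors). Combining these with $B/(1-\beta)$ reproduces the quantity appearing in Prop~\ref{prop:lambda_updating_rule}, confirming that one epoch of the rule is, up to stochastic noise, a subgradient step on $f$.

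Third, I would argue that the empirical estimator formed in each epoch is asymptotically unbiased for this subgradient. Under the assumption, trajectories collected during epoch $t$ are drawn (asymptotically) from $\pi^{\star}(\Lambda_t)$, so by the strong law of large numbers the sample averages of $\sum_t \beta^t \xi_{i,t} c_{i,t}$ converge to their expectations as the number of actions per epoch $\to \infty$. Combined with a Robbins--Monro step-size schedule ($\sum_t \alpha_t = \infty$, $\sum_t \alpha_t^2 < \infty$) and uniform boundedness of the subgradient estimator (via the bound $R_{\max}$ and the bounded cost set $\mathcal{C}$), classical convergence results for stochastic subgradient descent on convex functions then yield $\Lambda_t \to \lambda^{\star}$ as the number of epochs $\to \infty$.

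The main obstacle I expect is handling the two-timescale coupling between the inner $Q$-learning loop and the outer $\lambda$ updates: the subgradient estimate has a bias determined by how close the current arm policies are to being optimal for $\Lambda_t$, and this bias must be shown to vanish in the joint limit. I would control it either via the two-timescale stochastic-approximation framework of Borkar (treating the policy updates as the fast scale and $\Lambda$ as the slow scale), or by directly bounding $|V^{\hat\pi_t} - V^{\pi^{\star}(\Lambda_t)}|$ by a Lipschitz function of the $Q$-function approximation error and using the hypothesis that this error shrinks to zero. Convexity of $f$ in $\lambda$ makes the outer convergence a standard consequence once these bias and variance terms are under control.
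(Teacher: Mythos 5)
Your proposal takes essentially the same route as the paper's proof: both rest on convexity of the Lagrangian objective in $\lambda$ (the paper cites ``max of piece-wise linear functions''; you give the cleaner pointwise-supremum-of-affine-functions argument), both identify the update of Prop.~2 as an unbiased (sub)gradient step once the inner policies are optimal for the current $\Lambda_t$, and both conclude by stochastic-approximation arguments in the joint limit. Your write-up is in fact more careful than the paper's on the analytic side (Danskin's theorem, Robbins--Monro step sizes, the two-timescale bias issue), and you could also note, as the paper does, that unbiasedness of the cost estimator relies on \emph{not} enforcing the budget constraint during training. The one ingredient you omit is that $\Lambda$ is a \emph{network} mapping $(\bar{\bm s},\{\bm z_i\},\bm\xi)$ to a multiplier rather than a single scalar, so the paper additionally argues that all inputs (initial states, features, opt-in decisions) are sampled infinitely often in the limit; your argument as written establishes convergence only for a fixed input instance and would need that coverage condition to yield convergence of the network itself.
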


\subsection{Handling Continuous States with StateShaping}

Real-world problems may require continuous states with nonlinear rewards \cite{riquelme2018deep}. Existing RMAB algorithms either use a human-crafted discretization or fail to address challenging nonlinear rewards \cite{killian2022restless}. Discretization may result in loss of information and fail to generalize to different population sizes. For example, the popular SIS epidemic model \cite{yaesoubi2011generalized} is expected to scale to a continuum limit as the population size increases to infinity, and a continuous state-space model can better handle scaling by using proportions instead of absolute numbers. Under nonlinear rewards, naively using raw observations in training may result in poor performance (see Table~\ref{table:state_shaping}). We provide a StateShaping module to improve model stability and performance. 

% Existing works works often use a human-crafted discrete state space 
% \cite{killian2022restless}, which in practice may require discretizing continuous states, resulting in loss of information. A continuous state-space model would avoid discretization and can better handle scaling by using proportions instead of absolute numbers. For example, the popular SIS epidemic model \cite{yaesoubi2011generalized} is expected to scale to a continuum limit as the population size increases to infinity. To better handle continuous states with challenging reward functions, we provide a StateShaping subroutine. 

\begin{algorithm}[!h]
\caption{StateShaping Subroutine}
\begin{algorithmic}[1]
\State Input: estimator $\in\{\text{Isotonic Regression}, \text{KNN}\}$, states \( \bm s \in \mathbb{R}^N \), data \( \mathcal{D} \) of \( (s,r) \) tuples 
\State Output \( \bar{\bm s} = \bm s \) if no normalization is desired
\State Compute \vspace{-2mm}
\begin{align*}&r_{min} = \min_{s': \ s'\in\mathcal{D}} r(s'), \quad r_{max} = \max_{s': \ s'\in\mathcal{D}} r(s')\\
& s_{min} = \min_{s'\in\mathcal{D}} s', \quad s_{max} = \max_{s'\in\mathcal{D}} s'\end{align*}

\State Compute $\hat{r}(s_i)$ using the choice of Estimator.
    
\State Output \( \bar{\bm s} \), where \  \( \bar{s}_i = \frac{\hat{r}(s_i) - r_{\min}}{r_{\max} - r_{\min}} (s_{\max} - s_{\min}), \forall i \)
\end{algorithmic}
\label{alg:state_shaping}
\end{algorithm}

% \begin{algorithm}[!h]
% \caption{StateShaping Subroutine}
% \begin{algorithmic}[1]
% \State Input: states \( \bm s \in \mathbb{R} \), data \( \mathcal{D} \) consisting of \( (s,r) \) tuples, NormalizationRule, Estimator $\in\{\text{Isotonic Regression}, \text{KNN}\}$
% \If {NormalizationRule = Identity} 
%     \State Output \( \bar{\bm s} = \bm s \)
% \EndIf
% \State Compute \vspace{-2mm}
% \begin{align*}&r_{min} = \min_{s': \ s'\in\mathcal{D}} r(s'), \quad r_{max} = \max_{s': \ s'\in\mathcal{D}} r(s')\\
% & s_{min} = \min_{s'\in\mathcal{D}} s', \quad s_{max} = \max_{s'\in\mathcal{D}} s'\end{align*}

% \State Compute $\hat{r}(s_i)$ using the choice of Estimator.

% \State Compute for $i\in N$, \  \( \bar{s}_i = \frac{\hat{r}(s_i) - r_{\min}}{r_{\max} - r_{\min}} (s_{\max} - s_{\min}) \)
    
% \State Output \( \bar{\bm s} \)
% \end{algorithmic}
% \label{alg:state_shaping}
% \end{algorithm}

% The policy network is trained, given an input state, to choose actions maximizing the discounted cumulative reward. When reward functions are a more challenging function of state, we observe in experiments that training may be destabilized (with respect to convergence), leading to inconsistency in model performance across runs. State abstraction is known to add stability and to provide substantial performance gains \cite{abel2018state,andre2002state,dietterich1999state} by removing irrelevant information from raw observations \cite{li2006towards,makar2001hierarchical}. 
In Algorithm~\ref{alg:state_shaping}, users can choose whether to obtain abstract state \cite{abel2018state} from raw observations (lines 2). We compute ranges of reward and raw observations, and obtain an reward estimate (lines 3-4). After that, we automatically refine the raw observation such that reward is a linear function of the abstract state (line 4), improving model stability for challenging reward functions. Here a key assumption is that reward is an increasing function of the raw observation, which is common in RMABs \cite{killian2022restless}. Notice as we collect more observations, the accuracy of the reward estimator $\hat r(\cdot)$ will improve (it is updated in line 24 of Algorithm~\ref{alg:main_algorithm_training}).

% Back to Algorithm~\ref{alg:main_algorithm_training}, notice we use a abstract state $\bar{s_i}$ as inputs to the policy network (line 15), but we receive a raw observation $s'_i$ from the simulation (line 16). We then map the raw observation $s'$ to an abstract state $\bar{s'_i}$ using \textit{StateShaping} subroutine (line 17).  

%%%%%%%%%%%%%%%%%%%%%%%%%%%%%%%%%%%%%%%%%%%%

\subsection{Inference using Pretrained Model}

An important difference between training and inference is that during inference time, we strictly enforce the budget constraint on the trained model, by greedily selecting highest probability actions until the budget is reached. The rest of the inference components are similar to the training component. 

% A key observation is that PreFeRMAB does not seek to learn transition dynamics. Instead, it directly chooses actions based on states, features, and opt-in requests. 

%%%%%%%%%%%%%%%%%%%%%%%%%%%%%%%%%%%%%%%%%%%%

\begin{algorithm}[!h]
\caption{PreFeRMAB (Inference)}
\begin{algorithmic}[1]
\State Input: : States $\bm s$, costs $C$, budget $B$, features $\bm z_i \in\mathbb{R}^m$, opt-in decisions $\bm\xi$, agent actor $\theta$, $\lambda$-network, \textit{StateShaping} routine with trained estimator $\hat{r}(\cdot)$ % critic $\phi$
\State Compute $\lambda = \Lambda(\bar{\bm s},\{z_i\}_{i=1}^N, \bm\xi)$
\For {Arm $i$ = 1, . . . , N} 
    \If{Arm $i$ is opt-in (i.e. $\xi_i=1$)}
        \State $\bar{s_i}=$ \textit{StateShaping}$(s_i)$
        \State Compute $p_i\sim\theta(\bar{s}_i,\lambda, \bm z_i)$
    \EndIf
\EndFor
\State $\bm a$ = GreedyProba$(p, C, B)$ \hfill\Comment{Greedily select highest probability actions until budget B is reached}
\end{algorithmic}
\label{alg:main_algorithm_inference}
\end{algorithm}

%%%%%%%%%%%%%%%%%%%%%%%%%%%%%%%%%%%%%%%%%%%%%%%%%%%%%%%%%%%%

% ==== RESULTS ====
\section{Experimental Evaluation}
We provide experimental evaluations of our model in \textit{three separate domains}, including a synthetic setting, an epidemic modeling setting, as well as a maternal healthcare intervention setting. We first describe these three experimental domains. Then, we provide results for PreFeRMAB in a \textit{zero-shot evaluation setting}, demonstrating the performance of our model on \textit{new, unseen test arms} drawn from distributions distinct from those in training. 
%compared to the Jackson et al. \cite{killian2022restless} DDLPO topline trained and evaluated on a specific test set. 
Here, we demonstrate the flexibility of PreFeRMAB, including strong performance across domains, state representations (discrete vs. continuous), and over various challenging reward functions. Finally, we demonstrate the strength of using PreFeRMAB as a \textit{pre-trained model}, enabling faster convergence for fine-tuning on a specific set of evaluation arms. %Our code can be found at \href{https://anonymous.4open.science/r/RobustRMAB-3CF0/readme.md}{anonymous.4open.science/r/RobustRMAB-3CF0}.
% zip the code and submit in supplementary materials, since AAMAS CFP states "Papers must not include pointers to supplementary material on the web, not even when that supplementary material has been fully anonymised. The reason for this policy is that we would be unable to ensure that this material will remain unaltered throughout the reviewing period. "

In Appendix~\ref{sec:appendix_ablation}, we provide {\bf ablation studies} over (1) a wider range of opt-in rates (2) different feature mappings (3) DDLPO topline with and without features (4) more problem settings. 

\subsection{Experimental Settings}
\textbf{Features:} In all experiments, we generate features by projecting parameters that describe the ground truth transition dynamics into features using randomly generated projection matrices. The dimension of feature equals the number of parameters required to describe the transition dynamics. In Appendix~\ref{sec:appendix_ablation}, we provide results on different feature mappings. 

\textbf{Synthetic:} Following \cite{killian2022restless}, we consider a synthetic dataset with binary states and binary actions. The transition probabilities for each arm $i$ are represented by matrices $T_{s=0}^{(i)}$ and $T_{s=1}^{(i)}$ for arm $i$ at states $0$ and $1$ respectively:
\[ 
T_{s=0}^{(i)} = 
\begin{bmatrix}
p_{00} & 1 - p_{00} \\
p_{01} & 1 - p_{01}
\end{bmatrix}
, \ \ 
T_{s=1}^{(i)} = 
\begin{bmatrix}
p_{10} & 1 - p_{10} \\
p_{11} & 1 - p_{11}
\end{bmatrix}
\]

Each $p_{jk}$ corresponds to the probability of transitioning from state $j$ to state 0 when action $k$ is taken. These values are sampled uniformly from the intervals:
\[
p_{00} \in [0.4, 0.6], p_{01} \in [0.4, 0.6], p_{10} \in [0.8, 1], p_{11} \in [0.0, 1] 
\]
% \[
% p_{00} \in [0.4, 0.6] ,\ p_{01} \in [0.4, 0.6] ,\ p_{10} \in [0.8, 1.0] ,\ p_{11} \in [0.0, 1.0] 
% \]

{\bf SIS Epidemic Model:} Inspired by the vast literature on agent-based epidemic modeling, we adapt the SIS model given in \cite{yaesoubi2011generalized}, following a similar experiment setup as described in \cite{killian2022restless}. Arms $p$ represent a subpopulation in distinct geographic regions; states $s$ are the number of uninfected people within each arm's total population $N_p$; the number of possible states is $S$. Transmission within each arm is guided by parameters: $\kappa$, the average number of contacts within the arm's subpopulation in each round, and $r_{\textit{infect}}$, the probability of becoming infected after contact with an infected person. 
%Inspired by the vast literature on agent-based epidemic modeling \cite{kerr2021covasim,hinch2021openabm}

In this setting, there is a budget constraint over interventions. There are three available intervention actions ${a_0, a_1, a_2}$ that affect the transmission parameters: $a_0$ represents no action; $a_1$ represents messaging about physical distancing; $a_2$ represents the distribution of face masks. We discuss additional details in Appendix~\ref{sec:appendix_exp_details}.

{\bf ARMMAN:} Similar to the set up in \cite{biswas2021learn,killian2022restless}, we model the real world maternal health problem as a discrete state RMAB. We aim to encourage engagement with automated health information messaging. There are three possible states, presenting self-motivated, persuadable, and lost cause. The actions are binary. There are 6 uncertain parameters per arm, sampled from uncertainty intervals of
0.5 centered around the transition parameters that align with summary statistics given in \cite{biswas2021learn}.

{\bf Continuous State Modeling: } Continuous state restless bandits have important applications \cite{lefevre1981optimal,sinha2022robustness,dusonchet2003continuous}. By not explicitly having a switch in the model (switching between discrete and continuous state space), we enable greater model flexibility.
To demonstrate this, we consider both a Continuous Synthetic and a Continuous SIS modeling setting. We provide details of these settings in Appendix~\ref{sec:appendix_continuous_state_details}. % moved details to appendix due to space limits

We present {\bf additional details}, including hyperparameters and StateShaping illustration in Appendix~\ref{sec:appendix_exp_details}.

\subsection{PreFeRMAB Zero-Shot Learning}\label{sec:exp_discrete_state}
We first consider three challenging datasets in the discrete state space. After that, we present results on datasets with continuous state spaces with more complex reward functions and transition dynamics. 
% In this subsection, we show that our pretrained model can achieve strong performance with a set of unseen arms in a zero-shot evaluation setting. First, we consider three challenging datasets in the discrete state space, with unknown transition probabilities guiding the transition dynamics. After that, we present results on datasets with continuous state spaces, demonstrating that our model can work with more complex transition dynamics across continuous arm states. Finally, we show that we may achieve additional performance gains with a state shaping subroutine to handle various challenging reward functions. 

{\bf Pretraining. } For each pretraining iteration, we sample from a binomial with mean $0.8$ to determine which arms will be opted-in given system capacity $N$. For new arms, we sample new transition dynamics to allow the model to see a wider range of arm features.

% {\bf Pretraining. } \ For each experiment shown, we pretrain PreFeRMAB given a system capacity ($N$), a number of possible states per arm ($S$), and a given budget ($B$). During pretraining, we dynamically handle opt-in and opt-out arms within the system capacity $N$. For each iteration, we sample from a binomial with mean $0.8$ to determine which arms will be opted-in given system capacity $N$. For arms remaining opted-in, we maintain their transition dynamics; for new arms that were previously opted-out, we sample new transition dynamics to allow the model to see a wider range of arm features. 

{\bf Evaluation. }\ We compare PreFeRMAB to \textit{Random Action} and \textit{No Action} baselines. In every table in this subsection, we present the \textit{reward per arm} averaged over 50 trials, on \textit{new, unseen arms} arm sampled from the testing distribution. 

% {\bf Evaluation. }\ We compare PreFeRMAB to \textit{Random Action} and \textit{No Action} baselines and \textit{DDLPO} topline \cite{killian2022restless} results across our experiments. In each evaluation, we present the \textit{average reward per arm} achieved across the models over 50 trials. We sample a set of \textit{new, unseen arms} for evaluation on PreFeRMAB to measure zero-shot performance. For each setting, we also present the results for different percentages of opt-in rates; that is, we compare PreFeRMAB across \textit{exactly} 80\%, 85\%, 90\%, 95\%, and 100\% of opted-in arms at inference time. 

\begin{table}[h!]
\scalebox{.77}{
    \begin{tabular}{@{}lccc@{}}
        \toprule
        \multicolumn{4}{c}{\textbf{System capacity $N=21$.  Budget $B=7$.}}  \\ 
        \# Unique training arms & 45 & 33 & 21 \\ \midrule
        No Action                           & $2.88_{\pm0.17}$& $2.88_{\pm0.17}$& $2.88_{\pm0.17}$\\
        Random Action                       & $3.25_{\pm0.22}$& $3.25_{\pm0.22}$& $3.25_{\pm0.22}$\\
        { PreFeRMAB (2/4 Feats. Masked)}    & $3.81_{\pm0.23}$& $3.79_{\pm0.22}$& $3.59_{\pm0.21}$\\
        { PreFeRMAB (1/4 Feats. Masked)}    & $3.92_{\pm0.24}$& $3.70_{\pm0.21}$& $3.58_{\pm0.20}$\\
        {\bf PreFeRMAB (0/4 Feats. Masked)} & $4.02_{\pm0.26}$& $3.80_{\pm0.22}$& $3.78_{\pm0.21}$\\
       \midrule
    \end{tabular}
    }
    % \vspace{5pt}
    \caption{Multi-arm generalization results on Synthetic (opt-in 100\%). With the same total amount of data, PreFeRMAB achieves stronger performance when pretrained on more unique arms, especially when input arm features are masked. } 
    \label{table:uniquesamples_vs_reward}
\end{table}

{\bf Multi-arm Generalization: } Table~\ref{table:uniquesamples_vs_reward} on Synthetic illustrates that PreFeRMAB, learning from \textit{multi-arm generalization}, achieves stronger performance when the number of unique arms (i.e. arms with unique features) seen during pretraining increases. Additionally, in practice arm features may be missing or not always reliable, such as in real-world ARMMAN data \cite{mate2022field}. Our results demonstrate that when features are masked, arms could learn from similar arms' experience.

% \begin{table}[h!]
% \caption{Multi-arm generalization results on Synthetic (opt-in 100\%). With the same total amount of data, PreFeRMAB achieves stronger performance when pretrained on more unique arms.
% } 

% \scalebox{.75}{
%     \begin{tabular}{@{}lccccc@{}}
%         \toprule
%         \multicolumn{6}{c}{\textbf{System capacity $N=12$.  Budget $B=3$.}}  \\
%         \# Unique arms & 48 & 39 & 30 & 21 & 12 \\ \midrule
%         No Action       & $3.11_{\pm0.31}$& $3.11_{\pm0.31}$& $3.11_{\pm0.31}$& $3.11_{\pm0.31}$& $3.11_{\pm0.31}$\\
%         Random Action   & $3.45_{\pm0.31}$& $3.45_{\pm0.31}$& $3.45_{\pm0.31}$& $3.45_{\pm0.31}$& $3.45_{\pm0.31}$\\
%         {\bf PreFeRMAB} & $4.35_{\pm0.28}$& $4.28_{\pm0.29}$& $4.31_{\pm0.27}$& $4.04_{\pm0.32}$& $3.60_{\pm0.30}$\\
%         \midrule
%         \multicolumn{6}{c}{\textbf{System capacity $N=21$.  Budget $B=7$.}}  \\ 
%         \# Unique arms & 51 & 45 & 39 & 33 & 27 \\ \midrule
%         No Action       & $2.94_{\pm0.21}$& $2.94_{\pm0.21}$& $2.94_{\pm0.21}$& $2.94_{\pm0.21}$& $2.94_{\pm0.21}$\\
%         Random Action   & $3.40_{\pm0.22}$& $3.40_{\pm0.22}$& $3.40_{\pm0.22}$& $3.40_{\pm0.22}$& $3.40_{\pm0.22}$\\
%         {\bf PreFeRMAB} & $4.40_{\pm0.26}$& $4.40_{\pm0.23}$& $3.96_{\pm0.21}$& $3.84_{\pm0.19}$& $3.90_{\pm0.26}$\\
%         \midrule
%     \end{tabular}
%     }
%     % \vspace{5pt}
%     \label{table:uniquesamples_vs_reward}
% \end{table}

\begin{table}[h!]
\scalebox{.75}{
    \begin{tabular}{@{}lccccc@{}}
        \toprule
        $\textbf{Wasserstein}$& \textbf{0.05}& \textbf{0.10}& \textbf{0.15}& \textbf{0.20}& \textbf{0.25}\\
        $\textbf{Distance}$   & & & & &      
        \\ \midrule
        % \multicolumn{6}{c}{System capacity $N=21$. Budget $B=7$.}                                                                                       \\
        % \midrule
        % No Action                                                   & $3.15\pm0.14$& $3.06\pm0.08$& $2.89\pm0.10$& $2.70\pm0.11$& $2.54\pm0.08$\\
        % Random Action                                                   & $3.58\pm0.08$ & $3.46\pm0.2$ & $3.23\pm0.15$ & $3.01\pm0.18$ & $2.73\pm0.18$\\
        % {\bf PreFeRMAB}                                                     & $4.64\pm0.19$ & $4.44\pm0.18$ & $4.2\pm0.28$ & $3.56\pm0.21$ & $3.14\pm0.23$\\
        % \midrule
        \multicolumn{6}{c}{System capacity $N=48$.  Budget $B=16$.}                                                                                     \\
        \midrule
        No Action                                                   & $3.07_{\pm0.10}$& $2.89_{\pm0.08}$& $2.68_{\pm0.07}$& $2.49_{\pm0.09}$& $2.35_{\pm0.07}$\\
        Random Action                                                   & $3.49_{\pm0.09}$& $3.25_{\pm0.09}$& $2.99_{\pm0.16}$& $2.80_{\pm0.17}$& $2.57_{\pm0.17}$\\
        {\bf PreFeRMAB}                                                     & $4.50_{\pm0.09}$& $4.30_{\pm0.10}$& $3.81_{\pm0.17}$& $3.79_{\pm0.18}$& $3.46_{\pm0.12}$\\
        \midrule
        \multicolumn{6}{c}{System capacity $N=96$.  Budget $B=32$.}                                                                                     \\
        \midrule
        No Action                                                   & $3.09_{\pm0.08}$& $2.88_{\pm0.04}$& $2.74_{\pm0.05}$& $2.62_{\pm0.06}$& $2.49_{\pm0.06}$\\
        Random Action                                                   & $3.44_{\pm0.14}$& $3.23_{\pm0.09}$& $3.05_{\pm0.09}$& $2.90_{\pm0.10}$& $2.70_{\pm0.11}$\\
        {\bf PreFeRMAB}                                                     & $4.44_{\pm0.13}$& $4.26_{\pm0.13}$& $4.12_{\pm0.13}$& $3.97_{\pm0.16}$& $3.75_{\pm0.12}$\\
        \bottomrule
    \end{tabular}
    }
    \caption{Results on Synthetic (opt-in 100\%). For each system capacity, we pretrain a model and present zero-shot results under different amounts of distributional shift.  }
    % \vspace{5pt}
    \label{table:dist-shift}
\end{table}

\begin{table}[h!]
\scalebox{.75}{
    \begin{tabular}{@{}lccccc@{}}
        \toprule
        $\frac{\textbf{Number of arms}}{\textbf{System capacity}}$ & \textbf{80\%} & \textbf{85\%} & \textbf{90\%} & \textbf{95\%} & \textbf{100\%}\\ \midrule
        \multicolumn{6}{c}{Parameters $a_1^{eff}, a_1^{eff}$ are uniformly sampled from $[2,8]$.} \\
        \midrule
        No Action                                                   & $5.23_{\pm 0.17}$ & $5.27_{\pm 0.16}$ & $5.28_{\pm 0.16}$ & $5.26_{\pm 0.14}$ & $5.28_{\pm 0.13}$ \\
        Random Action                                                   & $6.94_{\pm0.15}$ & $7.00_{\pm0.16}$ & $7.03_{\pm0.15}$ & $6.97_{\pm0.14}$ & $6.99_{\pm0.12}$ \\
        {\bf PreFeRMAB}                                                     & $7.64_{\pm 0.27}$ & $7.75_{\pm0.25}$ & $7.96_{\pm0.18}$ & $7.80_{\pm0.16}$ & $7.82_{\pm0.11}$  \\
        \midrule
        \multicolumn{6}{c}{Parameters $a_1^{eff}, a_1^{eff}$ are uniformly sampled from $[3,9]$.}                                                                                     \\
        \midrule
        No Action                                                   
        & $5.29_{\pm0.16}$ & $5.30_{\pm0.17}$ & $5.29_{\pm0.15}$ & $5.26_{\pm0.14}$ & $5.28_{\pm0.13}$ \\
        Random Action                                                   
        & $7.21_{\pm0.15}$ & $7.28_{\pm0.18}$ & $7.26_{\pm0.15}$ & $7.22_{\pm0.13}$ & $7.22_{\pm0.12}$ \\
        {\bf PreFeRMAB}                                                     
        & $7.77_{\pm0.29}$ & $7.87_{\pm0.28}$ & $7.90_{\pm0.22}$ & $7.95_{\pm0.16}$ & $7.95_{\pm0.11}$ \\
        % \midrule
        % \multicolumn{6}{c}{Parameters $a_1^{eff}, a_1^{eff}$ are uniformly sampled from $[4,10]$.}                                                                                     \\
        % \midrule
        % No Action                                                   
        % & $5.31\pm0.15$ & $5.28\pm0.14$ & $5.27\pm0.16$ & $5.27\pm0.13$ & $5.28\pm0.13$ \\
        % Random Action                                                   
        % & $7.51\pm0.19$ & $7.48\pm0.14$ & $7.38\pm0.13$ & $7.42\pm0.13$ & $7.42\pm0.11$ \\
        % {\bf PreFeRMAB}                                                     
        % & $7.76\pm0.24$ & $7.72\pm0.27$ & $8.01\pm0.23$ & $7.94\pm0.16$ & $8.03\pm0.13$ \\
        \bottomrule
    \end{tabular}
    }
    \caption{Results on SIS ($N=20,B=16,S=150$). We pretrain a model and present zero-shot results on various distributions. During training, $a_1^{eff}, a_1^{eff}$ are uniformly sampled from $[1,7]$. }
    % \vspace{5pt}
    \label{table:dist-shift-sis}
\end{table}

{\bf Discrete State Settings with Different Distributional Shifts: }
Results on Synthetic (Table~\ref{table:dist-shift}) shows PreFeRMAB consistently outperforms under varying amounts of distributional shift, measured in Wasserstein distance. Results on SIS  (Table~\ref{table:dist-shift-sis}) shows PreFeRMAB performs well in settings with large state space $S=150$ and multiple actions, under various testing distributions and opt-in rates. Results on ARMMAN  (Table~\ref{table:dist-shift-armman}) shows PreFeRMAB could handle more \textit{challenging} settings that mimics the scenario of a real-world non-profit organization using RMABs to allocate resources.

\begin{table}[h!]
\scalebox{.75}{
    \begin{tabular}{@{}lccccc@{}}
        \toprule
        $\frac{\textbf{Number of arms}}{\textbf{System capacity}}$ & \textbf{80\%} & \textbf{85\%} & \textbf{90\%} & \textbf{95\%} & \textbf{100\%}\\ \midrule
        \multicolumn{6}{c}{40\% motivated, 20\% persuadable, and 40\% lost cause.} \\
        \midrule
        No Action                                                   
        & $2.39_{\pm 0.30}$ & $2.32_{\pm 0.28}$ & $2.16_{\pm 0.25}$ & $2.26_{\pm 0.28}$ & $2.25_{\pm 0.30}$ \\
        Random Action                                                   
        & $3.04_{\pm 0.40}$ & $3.06_{\pm 0.38}$ & $3.00_{\pm 0.36}$ & $3.14_{\pm 0.36}$ & $3.24_{\pm 0.32}$  \\
        {\bf PreFeRMAB}                                                 
        & $5.47_{\pm 0.41}$ & $5.00_{\pm 0.37}$ & $4.95_{\pm 0.29}$ & $5.34_{\pm 0.27}$ & $5.03_{\pm 0.37}$   \\
        \midrule
        % \multicolumn{6}{c}{20\% motivated, 40\% persuadable, and 40\% lost cause.}                                                                                     \\
        % \midrule
        % No Action                                                   
        % & $2.22\pm 0.28$ & $2.08\pm 0.25$ & $2.18\pm 0.24$ & $2.39\pm 0.31$ & $2.28\pm 0.25$ \\
        % Random Action                                                   
        % & $3.01\pm 0.34$ & $2.95\pm 0.31$ & $2.97\pm 0.28$ & $3.29\pm 0.33$ & $3.12\pm 0.31$  \\
        % {\bf PreFeRMAB}                                                
        % & $4.72\pm 0.40$ & $4.95\pm 0.39$ & $4.63\pm 0.31$ & $5.01\pm 0.32$ & $5.04\pm 0.33$   \\
        % \midrule
        \multicolumn{6}{c}{40\% motivated, 40\% persuadable, and 20\% lost cause.}                                                                               \\
        \midrule
        No Action                                                   
        & $2.07_{\pm 0.29}$ & $2.19_{\pm 0.28}$ & $2.19_{\pm 0.30}$ & $2.05_{\pm 0.24}$ & $2.17_{\pm 0.28}$ \\
        Random Action                                                   
        & $3.04_{\pm 0.37}$ & $3.02_{\pm 0.33}$ & $2.99_{\pm 0.30}$ & $3.12_{\pm 0.31}$ & $3.15_{\pm 0.29}$  \\
        {\bf PreFeRMAB}                                                
        & $5.06_{\pm 0.36}$ & $4.81_{\pm 0.35}$ & $5.13_{\pm 0.34}$ & $5.01_{\pm 0.26}$ & $5.00_{\pm 0.28}$   \\
        \bottomrule
    \end{tabular}
    }
    % \vspace{5pt}
    \caption{Results on ARMMAN ($N=25,B=7,S=3$). We pretrain a model and present zero-shot results on various testing distributions. During training, the proportion of self-motivated, persuadable, and lost cause arms are 20\%, 20\%, annd 60\% respectively.}
    \label{table:dist-shift-armman}
\end{table}

%%%%%%%%%%%%%%%%%%%%%%%%%%%%%%%%%%%%%%%%%%

% {\bf Continuous State Evaluation: } In addition to results in discrete-state environment settings shown above, we also evaluate the performance of PreFeRMAB on continuous state restless multi-armed bandit settings. Table~\ref{table:continuous_state_all} illustrates these results. We compare PreFeRMAB to Random Action and No Action baselines; we note that DDLPO, in its current form, cannot model a continuous state setting without significant modifications. Our results in two settings, Continuous Synthetic and Continuous SIS environments (see Appendix~\ref{sec:appendix_exp_details} for details), demonstrate the capability of PreFeRMAB in \textit{challenging, continuous-state settings} which may have more complex transition dynamics compared to the fixed, discrete transition probabilities in the discrete state setting. In particular, we find that PreFeRMAB performs consistently well in zero-shot testing across opt-in rates, environments, and system capacities. 

\textbf{Continuous State Settings: } Our results (Table~\ref{table:state_shaping}) show that StateShaping is crucial in handling continuous states, where the reward function can be more challenging. We provide additional evaluations in Table~\ref{table:continuous_state_all}, showing PreFeRMAB outperforms in complex transition dynamics. More details are provided in Appendix~\ref{sec:appendix_exp_details}. %\ref{sec:appendix_state_shaping}.

\begin{table}[htb!]
% \caption{Results for \textit{Continuous Synthetic} domain (N=21,B=7.0) \textit{with challenging rewards} $r(s)$. We compare average zero-shot reward for PreFeRMAB vs. PreFeRMAB with shaping. Continuous transition dynamics are used directly as input features for training and evaluation. Results indicate that state shaping, a form of state abstraction, may help stabilize and improve  PreFeRMAB performance for challenging reward functions. Note that average reward changes with different reward functions.}
\scalebox{.75}{
\centering
\begin{tabular}{@{}lcc@{}}
\toprule
% \textbf{Method} & \multicolumn{2}{c}{\textbf{Zero-Shot Reward}} \\ 
\textbf{Reward Function} & \textbf{Scaled Linear} & \textbf{Exponential} \\
% \cmidrule{2-3}
& \( r(s) = \min(2s, 1) \) & \( r(s) = \min(e^{s} - 1, 1) \) \\
\midrule
No Action & $1.03 \pm 0.32$ & $0.71 \pm 0.26$ \\
Random Action & $4.18 \pm 0.45$ & $3.40 \pm 0.43$ \\
{\bf PreFeRMAB} & $4.50 \pm 0.39$ & $3.80 \pm 0.35$ \\
\textbf{{\bf PreFeRMAB}, With Shaping} & \textbf{5.28 $\pm$ 0.55} & \textbf{4.38 $\pm$ 0.55} \\
\bottomrule
\end{tabular}
}
\caption{Results for \textit{Continuous Synthetic} domain (N=21,B=7.0) \textit{with challenging rewards} $r(s)$.}
\label{table:state_shaping}
\end{table}

\begin{table}[htb!]
\scalebox{.75}{
\begin{tabular}{@{}lccccc@{}}
\toprule
$\frac{\textbf{Number of arms}}{\textbf{System capacity}}$ & \textbf{80\%} & \textbf{85\%} & \textbf{90\%} & \textbf{95\%} & \textbf{100\%}\\ \midrule
\multicolumn{6}{c}{Continuous Synthetic (N=21, B=7.0, S=2)} \\
\midrule
  No Action       & $0.70_{\pm 0.25}$ & $0.71_{\pm 0.25}$ & $0.70_{\pm 0.23}$ & $0.70_{\pm 0.23}$ & $0.66_{\pm 0.21}$ \\
  Random Action & $3.44_{\pm 0.48}$ & $3.43_{\pm 0.45}$ & $3.45_{\pm 0.41}$ & $3.37_{\pm 0.41}$ & $3.20_{\pm 0.38}$ \\
  {\bf PreFeRMAB}          & $3.94_{\pm 0.31}$ & $3.76_{\pm 0.33}$ & $4.01_{\pm 0.31}$ & $4.02_{\pm 0.29}$ & $3.67_{\pm 0.27}$ \\
  \midrule
% \multicolumn{6}{c}{Continuous SIS Model (N=20, B=16, S=50)} \\
\multicolumn{6}{c}{Continuous SIS Model (N=20, B=16)} \\%removed S to avoid confusion
\midrule
  No Action       & $5.64_{\pm 0.25}$ & $5.68_{\pm 0.19}$ & $5.57_{\pm 0.21}$ & $5.48_{\pm 0.18}$ & $5.62_{\pm 0.17}$ \\
  Random Action & $7.11_{\pm 0.22}$ & $7.31_{\pm 0.22}$ & $7.23_{\pm 0.22}$ & $7.24_{\pm 0.21}$ & $7.18_{\pm 0.17}$ \\
    {\bf PreFeRMAB}          & $7.91_{\pm 0.17}$ & $8.08_{\pm 0.11}$ & $7.95_{\pm 0.14}$ & $7.98_{\pm 0.13}$ & $7.82_{\pm 0.12}$ \\
\bottomrule
\end{tabular}
}
\caption{Results on continuous states. For each problem instance, we pretrain a model.}
\label{table:continuous_state_all}
\end{table}

\begin{table}[htb!]
\scalebox{.75}{
\begin{tabular}{@{}lccccc@{}}
\toprule
$\frac{\textbf{Number of arms}}{\textbf{System capacity}}$ & \textbf{80\%} & \textbf{85\%} & \textbf{90\%} & \textbf{95\%} & \textbf{100\%}\\ \midrule
\multicolumn{6}{c}{Synthetic with $N=96,B=32,S=2$.} \\
\midrule
  No Action                  & $3.22_{\pm 0.12}$ & $3.24_{\pm 0.12}$ & $3.19_{\pm 0.11}$ & $3.18_{\pm 0.11}$ & $3.18_{\pm 0.11}$ \\
  Random Action              & $3.62_{\pm 0.13}$ & $3.66_{\pm 0.13}$ & $3.58_{\pm 0.13}$ & $3.60_{\pm 0.12}$ & $3.60_{\pm 0.12}$ \\
  {\bf PreFeRMAB}                   & $4.63_{\pm 0.12}$ & $4.71_{\pm 0.12}$ & $4.53_{\pm 0.13}$ & $4.47_{\pm 0.12}$ & $4.61_{\pm 0.10}$ \\
  DDLPO (topline)                & \textit{n/a} & \textit{n/a} & \textit{n/a} & \textit{n/a} & $4.58_{\pm 0.13}$ \\
  \midrule
\multicolumn{6}{c}{SIS with $N=20,B=16,S=150$.} \\
\midrule
  No Action & $5.33_{\pm 0.16}$ & $5.30_{\pm 0.15}$ & $5.31_{\pm 0.14}$ & $5.29_{\pm 0.13}$ & $5.28_{\pm 0.13}$\\
  Random Action & $7.03_{\pm 0.17}$ & $7.13_{\pm 0.16}$ & $7.02_{\pm 0.14}$ & $7.11_{\pm 0.13}$ & $7.06_{\pm 0.13}$\\
{\bf PreFeRMAB} & $8.35_{\pm 0.12}$ & $8.38_{\pm 0.11}$ & $8.26_{\pm 0.11}$ & $8.10_{\pm 0.11}$ & $8.00_{\pm 0.10}$\\
DDLPO (topline)                & \textit{n/a} & \textit{n/a} & \textit{n/a} & \textit{n/a} & $8.09_{\pm 0.11}$ \\
    \midrule
\multicolumn{6}{c}{ARMMAN with $N=25,B=7,S=3$.} \\
\midrule
  No Action & $2.12_{\pm 0.26}$ & $2.30_{\pm 0.29}$ & $2.29_{\pm 0.27}$ & $2.19_{\pm 0.23}$ & $2.26_{\pm 0.25}$\\
  Random Action & $2.86_{\pm 0.32}$ & $3.27_{\pm 0.40}$ & $3.01_{\pm 0.30}$ & $3.09_{\pm 0.35}$ & $2.96_{\pm 0.31}$\\
  {\bf PreFeRMAB} & $5.06_{\pm 0.34}$ & $5.26_{\pm 0.33}$ & $4.68_{\pm 0.33}$ & $4.75_{\pm 0.35}$ & $4.61_{\pm 0.27}$\\
  DDLPO (topline)                & \textit{n/a} & \textit{n/a} & \textit{n/a} & \textit{n/a} & $4.68 _{\pm 0.09}$ \\
\bottomrule
\end{tabular}
}
\caption{Comparison of PreFeRMAB \text{zero-shot} performance on unseen arms against that of DDLPO trained and tested on the same set of arms. For each problem instance, we pretrain a model.}
\label{table:discrete_all_with_topline}
\end{table}

\textbf{Comparison with an Additional Baseline: } DDLPO \cite{killian2022restless} could not handle distributional shifts or various opt-in rates, the more challenging settings that PreFeRMAB is designed for. Nevertheless, we provide comparisons with DDLPO in settings with no distributional shift (Table~\ref{table:discrete_all_with_topline}, see also Appendix~\ref{sec:appendix_different_n_b_s}). Notably, PreFeRMAB zero-shot performance on \textit{unseen} arms is near that of DDLPO, which is trained and tested on the same set of arms.

\subsection{PreFeRMAB Fast Fine-Tuning}
\label{sec:prefermab-fast-ft}
Having shown the zero-shot results of PreFeRMAB, we now demonstrate finetuning capabilities of the pretrained model. In Figure~\ref{fig:fine_tuning}, we compare the number of samples required to \textit{train DDLPO from scratch} vs. the number of samples for \textit{fine-tuning PreFeRMAB} starting from a pre-trained model (additional results in Appendix \ref{sec:appendix_fine_tuning}). Results suggests the cost of pretraining can be amortized over different downstream instances. A non-profit organization using RMAB models may have new beneficiaries opting in every week, and training a new model from scratch every week can be 3-20 times more expensive than  fine-tuning our pretrained model. 

\begin{figure}[!ht]
  \centering
  \includegraphics[width=0.5\textwidth]{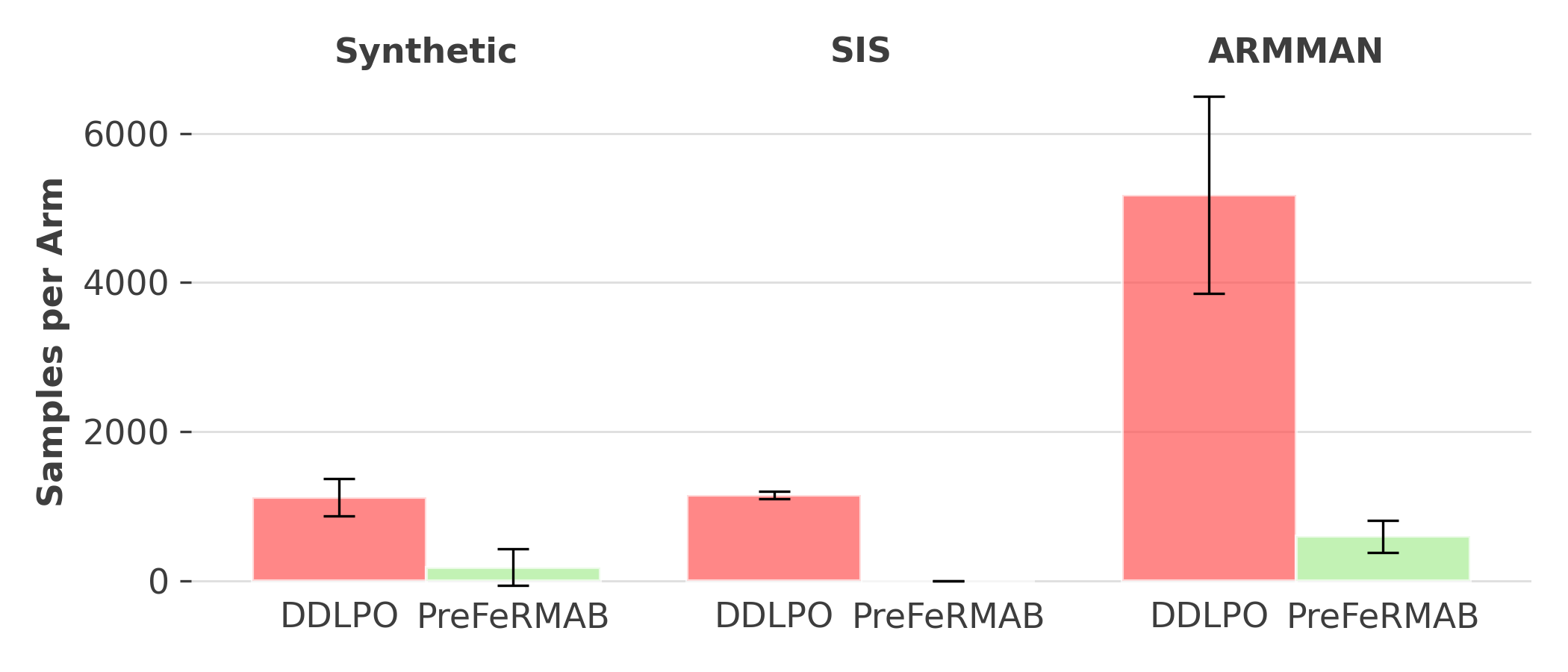}
  \caption{Comparison of samples per arm required by DDLPO and PreFeRMAB (fine-tuning using a pretrained model) to achieve maximum DDLPO reward across different environments. PreFeRMAB achieves the maximum topline reward with significantly fewer samples than DDLPO. Averages across training seeds are reported as interquartile means.}% 100 timesteps per epoch. 
  \label{fig:fine_tuning}
\end{figure}

\section{Conclusion}

% Our model shares some similarities with the class of ``foundation models'' that have recently demonstrated exceptional performance in self-supervised settings. 
Our pretrained model (PreFeRMAB) leverages multi-arm generalization, a novel update rule for a crucial $\lambda$-network, and a StateShaping module for challenging reward functions. PreFeRMAB demonstrates general zero-shot ability on unseen arms, and can be fine-tuned on specific instances in a more sample-efficient way than training from scratch.

% Our results show that the number of samples required for training the previous DDLPO model from scratch exceeds PreFeRMAB fine-tuning samples required by 800\% in a setting modeling real-world healthcare problems. 
%Our experimental results show that the number of samples required for training the previous DDLPO model from scratch exceeds PreFeRMAB fine-tuning samples required by 800\% in a setting mimicking real-world non-profit organizations using RMABs to allocate resources \cite{wang2023scalable,verma2023restless}. 
%For future work, towards developing a foundation model for resource allocation under time-evolution, it would be interesting to consider the following extensions. Firstly, a future work could conduct field studies to better demonstrate the advantage of pretrained RMAB models in real-world public health settings. Second, our model currently has zero-shot and fine-tuning capabilities, while future work may extend to intrinsic few-shot adaptation ability, such as that shown in the recent AdA model in reinforcement learning \cite{team2023human}. 
%Specifically, future work may focus on general learning across application areas, enabling human-like adaptation to unseen types of resource allocation tasks. We hope that our first step towards zero-shot learning in RMABs paves the way for even more generally capable and widely applicable models in the future.  

\newpage
\section*{Ethical Statement} The presented methods do not carry direct negative societal implications. However, training reinforcement learning models should be done responsibly, especially given the safety concerns associated with agents engaging in extreme, unsafe, or uninformed exploration strategies. While the domains we considered such as ARMMAN do not have these concerns, the approach may be extended to extreme environments; in these cases, ensuring a robust approach to training reinforcement models is critical. 

\section*{Acknowledgments}
The work is supported by Harvard HDSI funding.

%% The file named.bst is a bibliography style file for BibTeX 0.99c
\bibliographystyle{named}
\bibliography{ijcai24}

\appendix
\newpage % supplementary materials will be submitted in a separate pdf

\section{Additional Experimental Details}
\label{sec:appendix_exp_details}

\begin{figure*}[!h]
    \centering
    \includegraphics[width=\textwidth]{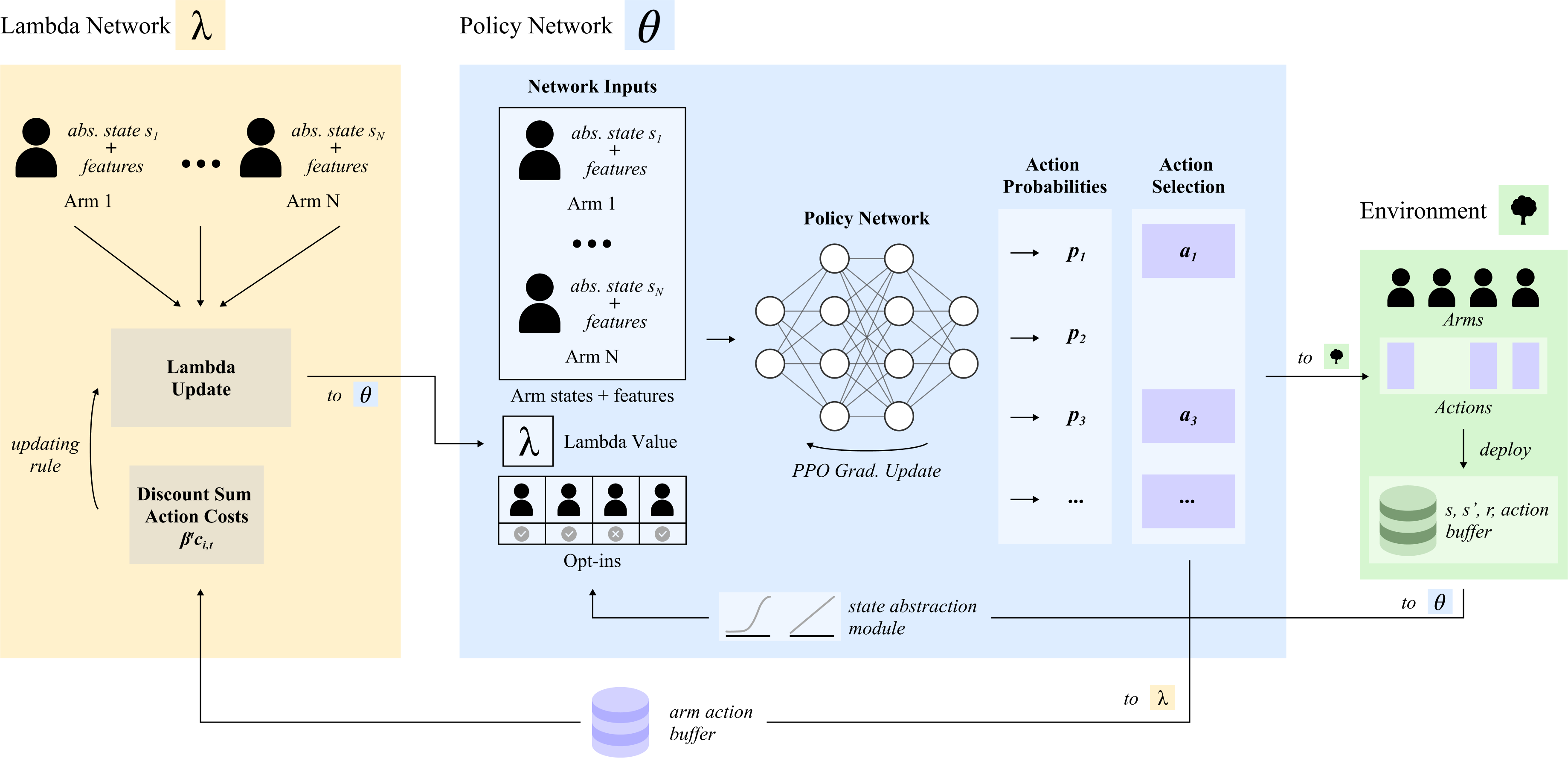}
    \caption{Overview of the PreFeRMAB training procedure. A trained model consists of a policy network, a critic network, a $\lambda$-network, and a StateShaping module. Arm states $s_i$, features $z_i$, and opt-in decisions $\xi$ are passed through the policy network with an action-charge $\lambda$. The policy network independently predicts action probabilities for each arm, which are then greedily selected until the specified budget is reached. These selected actions are used with arm state, feature, and opt-in information to update the $\lambda$-network. Updated arm states $s'$ and rewards $r$ from the environment are then added to the buffer, and passed through the state abstraction module before being fed back through the policy network.}
    \label{fig:overview}
\end{figure*}

\subsection{Hyperparameters}
In Table~\ref{table:hyperparameter}, we present hyperparameters used, with exceptions (1) for Continuous Synthetic, we use lambda scheduler discount rate = 0.95 (2) for Continuous SIS, we use training opt-in rate = 0.8.

In our experiments, all neural networks have 2 hidden layers each with 16 neurons and tanh activation. The output layer has identity activation and its size is determined by the number of actions (3 for SIS and Continuous SIS, and 2 for other environments). 

The $lambda$-network training is similar to that in Killian \textit{et al.}\shortcite{killian2022restless}. After every n\_subepochs, we update the $\lambda$-network and encourage the actor network to explore new parts of the state space immediately after the $lambda$-update (this exploration is controlled by the temperature parameter that weights the entropy term in the actor loss functions). 

Different from Killian \textit{et al.}\shortcite{killian2022restless}, we use a $\lambda$-network learning rate scheduler, which we found improves the performance and stability of the model.

\begin{table}[htb!]
    \caption{Hyperparameter values.}
\scalebox{.80}{
    \begin{tabular}{@{}lc@{}}
        \toprule
        hyperparameter & value \\ 
        \midrule
        training opt-in rate & 0.8\\
        agent clip ratio  & 2.0e+00\\
        lambda freeze epochs & 2.0e+01\\
        start entropy coeff & 5.0e-01\\
        end entropy coeff & 0.0e+00\\
        actor learning rate & 2.0e-03\\
        critic learning rate & 2.0e-03\\
        lambda initial learning rate & 2.0e-03\\
        lambda scheduler discount rate& 0.99\\
        trains per epoch & 2.0e+01\\
        n\_subepochs & 4.0e+00\\
        \bottomrule
    \end{tabular}
    }
    \label{table:hyperparameter}
\end{table}

\subsection{SIS Modeling (Discrete) Experimental Details}

% move some of these to the main paper
Recall that each arm $p$ represents a subpopulation in distinct geographic regions. The state of each arm $s$ is the number of uninfected people within the arm's total population $N_p$. Transmission within each arm is guided by parameters: $\kappa$, the average number of contacts within the arm's subpopulation in each round, and $r_{\textit{infect}}$, the probability of becoming infected after contact with an infected person. The probability that a single uninfected person gets infected is then: 
$$
q = 1 - e^{-\kappa \cdot \frac{S-s}{S} \cdot r_{infect} },
$$
where $S$ is the number of possible states, and $s\in[S]$ is the current state. Note $\frac{S-s}{S}$ is the percentage of people who are currently infected. The number of infected people in the next timestep follows a binomial distribution $B(S,q)$.

Recall that there are three available intervention actions ${a_0, a_1, a_2}$ that affect the transmission parameters: $a_0$ represents no action; $a_1$ represents messaging about physical distancing, dividing $\kappa$ by $a_1^{\textit{eff}}$; $a_2$ represents the distribution of face masks, dividing $r_{\textit{infect}}$ by $a_2^{\textit{eff}}$. The actions costs are $c=\{0,1,2\}$. Following the implementation in \cite{killian2022restless}, these parameters are sampled within ranges: 
\[
\kappa \in [1, 10], \ r_{\textit{infect}} \in [0.5, 0.99], \ a_1^{\textit{eff}} \in [1, 10], a_2^{\textit{eff}} \in [1, 10]
\]

\subsection{Continuous States Experimental Details}
\label{sec:appendix_continuous_state_details}
We consider a synthetic dataset with continuous states and binary actions. For the current state $s_{i}$ of arm $i$, and action $a$, the next state $s_i'$ is represented by the transition dynamic: 
\[
s_{i}' = \begin{cases} 
\text{clip}\left(s_i + \mathcal{N}(\mu_{i0}, \sigma_{i0}), 0, 1\right) & \text{if } a = 0 \\
\text{clip}\left(s_i + \mathcal{N}(\mu_{i1}, \sigma_{i1}), 0, 1\right) & \text{if } a = 1 \\
\end{cases}
\]

Where the transition dynamics are sampled uniformly from the intervals ($\sigma_{i0}=\sigma_{i1} = 0.2$ is fixed): 
\[
\mu_{i0} \in [-0.5, -0.1], \quad \mu_{i1} \in [0.1, 0.5].
\]
% \[
% \mu_{i0} \in [-0.5, -0.1], \sigma_{i0} \in [0.2, 0.2], \ \mu_{i1} \in [0.1, 0.5], \sigma_{i1} \in [0.2, 0.2]
% \]

We also consider continuous state experiments in real-world settings. In the discrete state SIS Epidemic Model described above, each arm represents a subpopulation, and the state of that arm represents the number of uninfected people within the subpopulation. In real-world public health settings such as COVID-19 control, interventions like quarantine and mask mandates may be imposed on subpopulations of very large sizes such as an entire city. The SIS model is expected to scale to a continuum limit as the population size increases to infinity. Thus, a SIS model with population 1 million would behave roughly similar to that with population 1 billion in terms of the proportions. This notion is inherently captured by continuous models but not by those dealing with absolute numbers. 
%mask mandates may be imposed on subpopulations of very large sizes such as an entire city \cite{chu2020social}. 

Following Killian \textit{et al.}\shortcite{killian2022restless}, within an arm, any uninfected person will get infected with the same probability. Thus, the number of uninfected people in the next timestep follows a binomial distribution. It is well-known that a normal distribution $\mathcal{N}(\mu,\sigma^2)$ well approximates a binomial distribution $B(n,p)$, with the choice $\mu=np$ and  $\sigma^2=np(1-p)$, when $n$ is sufficiently large. 
%It is well-known that a normal distribution $\mathcal{N}(\mu,\sigma^2)$ well approximates a binomial distribution $B(n,p)$ \cite{feller2015retracted,peizer1968normal}

\subsection{Distributional Shift Details}
\label{sec:appendix_dist_shift_details}

In real-world resources allocation problems, we may observe distribution shifts in arms, i.e., arms in testing are sampled from a distribution slightly different from that in training. In public health settings, a non-profit organization solving RMAB problems to allocate resources to beneficiaries may observe that beneficiaries' behavior or feature information change over time \cite{wang2023scalable,killian2022restless}. Additionally, a non-profit organization may have new beneficiaries joining who are in a different subpopulation. In \autoref{table:dist-shift}, we provide ablation results illustrating that {\bf PreFeRMAB is robust to distribution shift in arms}. We measure the shift in distribution using Wasserstein distance. The results demonstrate that even on arm samples from distributions that significantly deviate from that seen in training, PreFeRMAB still achieves strong performance and outperforms baselines. 

For each arm, the associated Markov Decision Process (MDP) has only two discrete states, the transition dynamics \( p(s' | s, a) \), representing the probability of transitioning to state \( s' \) from state \( s \) given action \( a \), can be described by four Bernoulli random variables, one for each combination of state and action. By introducing a uniform distribution shift, we can modify the transition probabilities of the Bernoulli random variable associated with each state-action pair by adding a constant \( \delta \) to the parameter of each Bernoulli distribution. Consequently, this results in a consistent shift in the transition probabilities across all states and actions. 

Furthermore, this delta is exactly the Wasserstein distance between the two distributions, which we show here. Suppose we have two discrete probability distributions, \( P \) and \( Q \), with their respective probabilities associated with the outcomes \( x_i \) and \( y_j \). The The 1-Wasserstein distance, also known as the Earth Mover's Distance \( W(P, Q) \), can be calculated by solving the following optimization problem: 
\[ W(P, Q) = \inf_{\gamma \in \Gamma(P, Q)} \sum_{i,j} |x_i - y_j| \gamma(x_i, y_j), \] 

where \( \Gamma(P, Q) \) is the set of all joint distributions \( \gamma \) with marginals \( P \) and \( Q \), and \( \gamma(x_i, y_j) \) represents the amount of "mass" moved from \( x_i \) to \( y_j \). 

Note \( W(P, Q) \) between two Bernoulli distributions with parameters \( b_1 \) and \( b_2 \) can be succinctly determined as \( |b_2 - b_1| \). This is because each Bernoulli distribution has only two potential outcomes, 0 and 1, and so moving mass from one outcome to another across these distributions involves a shift of probability mass \( |b_2 - b_1| \) across the one-unit distance between the two points. Therefore, without loss of generality assuming \( b_2 \geq b_1 \), the Wasserstein distance simplifies to the non-negative difference \( b_2 - b_1 \).

\subsection{Fast Fine-Tuning}
\label{sec:appendix_fine_tuning}

\begin{figure}[!ht]
  \centering
  \includegraphics[width=0.4\textwidth]{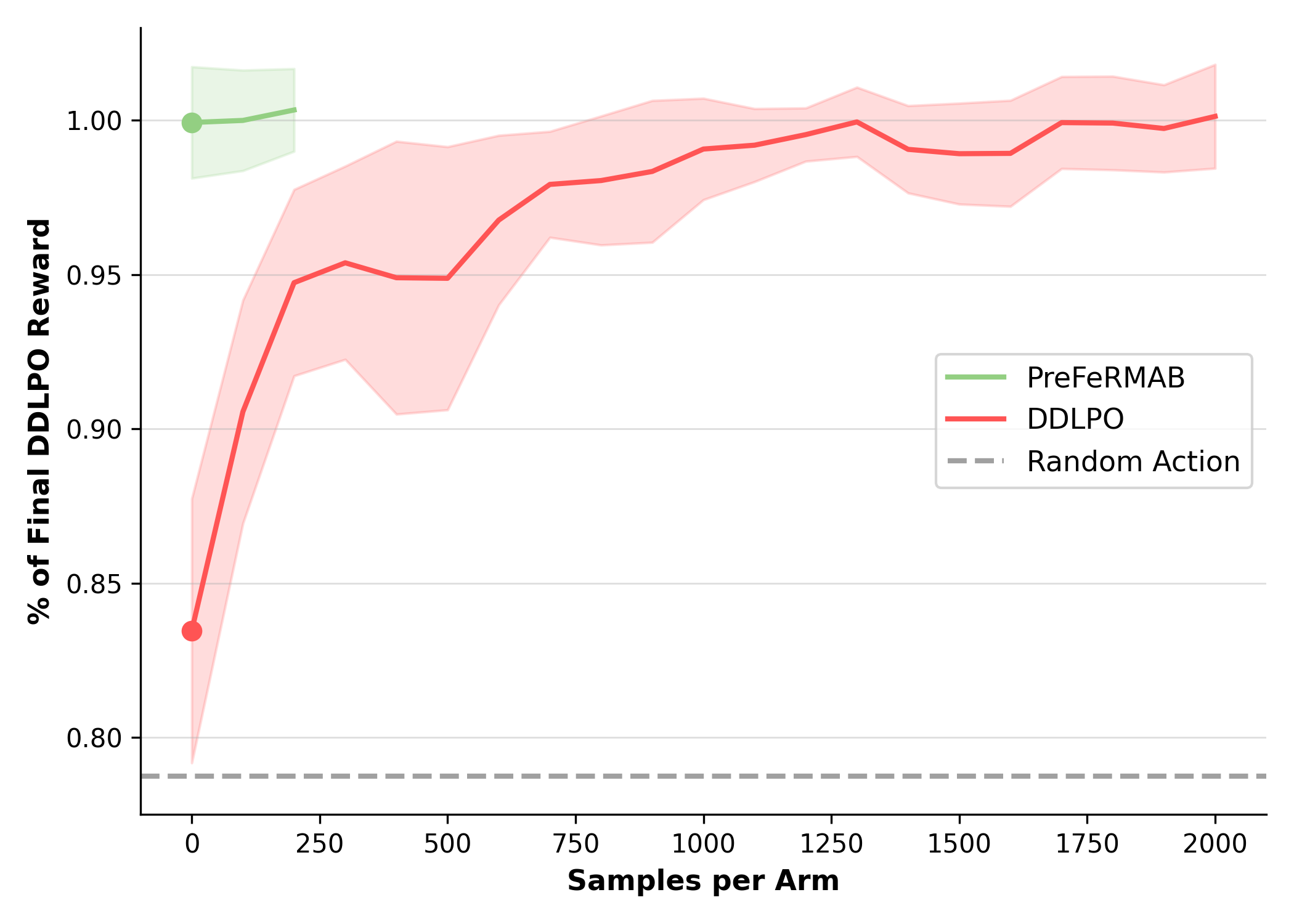}
  \caption{Comparison of the percentage of the final DDLPO (Killian \textit{et al.} [2022] topline) reward achieved by the number of samples per arm. In DDLPO, samples are used for training from scratch; in PreFeRMAB, samples are used to fine-tune a pretrained PreFeRMAB model. Results indicate that PreFeRMAB, from zero-shot results, achieves near-optimal performance, and requires a small fraction of the required DDLPO samples to achieve final DDLPO performance. }% 100 timesteps per epoch. 
  \label{fig:fine_tuning_graph}
  % \Description{Plotted results for fine-tuning of PreFeRMAB.}
\end{figure}

In \autoref{sec:prefermab-fast-ft}, we demonstrate that, in addition to strong zero-shot performance, PreFeRMAB may also be used as a \textit{pretrained model} for fast fine-tuning in specific domains. In particular, we demonstrate that we may start from a pre-trained PreFeRMAB model, and train on additional samples for a fixed environment (with fixed arm transition dynamics). We showed that using this pre-trained model can help achieve topline DDLPO performance in significantly fewer fine-tuning samples than required by DDLPO to train from scratch. In \autoref{fig:fine_tuning_graph}, we further visualize these results in training curves comparing DDLPO and PreFeRMAB. These training curves, which plot the number of samples per arm against the achieved percentage of final DDLPO reward, are shown for the discrete-state synthetic environment setting for N=21, B=7.0 

The results in \autoref{fig:fine_tuning_graph} demonstrate that PreFeRMAB shows both 1) strong zero-shot performance, achieving near-topline reward with \textit{no fine-tuning samples required}, as well as 2) a significant reduction in the number of samples required to achieve final DDLPO performance. In particular, we note that DDLPO, before training, achieves a reward only marginally higher than the average Random Action reward. Alternatively, PreFeRMAB begins, in a zero-shot setting, with a much higher initial reward value. We also observe that PreFeRMAB requires significantly fewer samples per arm to achieve the final DDLPO reward. This is particularly critical in high-stakes, real-world settings where continually sampling arms from the environment may be prohibitively expensive, especially for low-resource NGOs.

\subsection{StateShaping}
\label{sec:appendix_state_shaping}

Figure~\ref{fig:state_shaping} provides a simple example, illustrating how we adapt states through the state abstraction procedure. In this particular example, the reward is an increasing function of the state, and the reward plateaus at state 0.5, i.e. $s\in[0.5, 1]$ achieve the same reward. We map all raw observations in the range $[0.5, 1]$ to abstract state 1. We note that this process is automated, using data collected on arm states and reward from prior (historical) samples to 1) estimate the reward of a current arm, and 2) use this reward to normalize the arm state. We demonstrate how these states are mapped to normalized values in Table \ref{fig:state_shaping}. 

In Table \ref{table:state_shaping}, we show results in two settings after 30 epochs of training, evaluating on a separate set of test arms for zero-shot evaluation. Continuous transition dynamics are used directly as input features for training and evaluation. The results illustrate that state abstraction can help achieve additional performance gains for various challenging reward functions. Drawing from prior literature on state abstraction, this modular component of PreFeRMAB may also serve as a placeholder for future automated state abstraction procedures to improve generalizability and robustness of PreFeRMAB across domains with challenging reward functions. 

% Without State Shaping, hugely different raw observations achieve the same reward (raw observations in the range $[0.5, 1]$ all receive reward 1). Directly using the raw observations may confuse the model and hurt the performance. As shown in Table~\ref{table:state_shaping}, with State Shaping, our model achieves additional performance gains.

\begin{figure}[h]
    \centering
     \begin{tabular}{cccc}
         % Correct model & Incorrect model\\
        \includegraphics[width=0.45\textwidth]{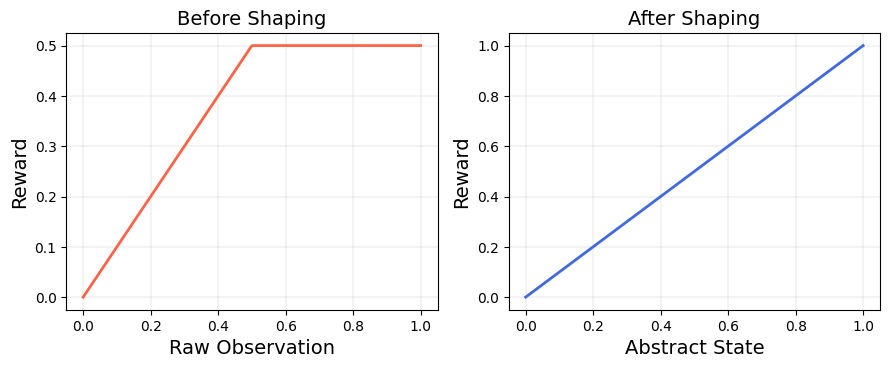}
     \end{tabular}
%\vspace{-8mm}
\caption{Illustration for StateShaping.}
\label{fig:state_shaping}
% \Description{Illustration for state shaping.}
\end{figure}

%%%%%%%%%%%%%%%%%%%%%%%%%%%%%%%%%%%%%%%%%%%%%%%%%%%%%%%%%%%%%%%%%%%%%%%%

\section{Ablation Studies}
\label{sec:appendix_ablation}
In this section, we provide ablation results over (1) a wider range of opt-in rates than presented in the main paper (Table~\ref{table:discrete_state_synthetic}) (2) different feature mappings, including linear and non-linear feature transformations of the original transition probabilities (3) DDLPO topline (Killian \textit{et al.}~\shortcite{killian2022restless}) with and without transition probability features as inputs (4) results in more problem settings. The ablation results showcase that PreFeRMAB consistently achieve strong performance and having access to feature information does not provide PreFeRMAB an unfair advantage over DDLPO.

\subsection{Opt-in Rates}
\begin{table}[h!]
\scalebox{.65}{
    \begin{tabular}{@{}lccccc@{}}
        \toprule
        $\frac{\textbf{Number of arms}}{\textbf{System capacity}}$ & \textbf{30\%}& \textbf{40\%}& \textbf{50\%}& \textbf{60\%}& \textbf{70\%}\\ \midrule
        \multicolumn{6}{c}{System capacity $N=21$. Budget $B=7$.}                                                                                       \\
        \midrule
        No Action                                                   & $3.09\pm0.31$& $3.10\pm0.32$& $3.12\pm0.29$& $3.14\pm0.25$& $3.16\pm0.22$\\
        Random Action                                                   & $3.57\pm0.48$& $3.46\pm0.48$& $3.55\pm0.35$& $3.55\pm0.34$& $3.57\pm0.33$\\
        {\bf PreFeRMAB}                                                     & $3.78\pm0.72$& $3.75\pm0.70$& $4.16\pm0.57$& $4.52\pm0.54$& $4.45\pm0.42$\\
        \midrule
        \multicolumn{6}{c}{System capacity $N=48$.  Budget $B=16$.}                                                                                     \\
        \midrule
        No Action                                                   & $3.19\pm0.27$& $3.15\pm0.23$& $3.13\pm0.12$& $3.17\pm0.12$& $3.17\pm0.14$\\
        Random Action                                                   & $3.44\pm0.26$& $3.43\pm0.23$& $3.46\pm0.20$& $3.47\pm0.17$& $3.44\pm0.17$\\
        {\bf PreFeRMAB}                                                     & $3.90\pm0.50$& $3.64\pm0.30$& $3.87\pm0.27$& $3.85\pm0.25$& $4.06\pm0.31$\\
        \midrule
        \multicolumn{6}{c}{System capacity $N=96$.  Budget $B=32$.}                                                                                     \\
        \midrule
        No Action                                                   & $3.21\pm0.17$& $3.17\pm0.20$& $3.17\pm0.15$& $3.18\pm0.14$& $3.17\pm0.13$\\
        Random Action                                                   & $3.54\pm0.22$& $3.55\pm0.23$& $3.58\pm0.18$& $3.55\pm0.18$& $3.56\pm0.13$\\
        {\bf PreFeRMAB}                                                     & $3.93\pm0.33$& $3.72\pm0.23$& $3.79\pm0.18$& $4.02\pm0.21$& $4.16\pm0.25$\\
        \bottomrule
    \end{tabular}
    }
    \caption{Robustness to different opt-in rates with identity mapping. Evaluation follows \autoref{table:discrete_state_synthetic_nonlin_transform}: we run for 50 trials with 2 total number of states for each arm, and pretrain a model for each system capacity $N$ and test generalization on different opt-in rates. }
    \label{table:robust-opt-in-rates}
\end{table}

\begin{table}[h!]
\scalebox{.65}{
    \begin{tabular}{@{}lccccc@{}}
        \toprule
        $\frac{\textbf{Number of arms}}{\textbf{System capacity}}$ & \textbf{30\%}& \textbf{40\%}& \textbf{50\%}& \textbf{60\%}& \textbf{70\%}\\ \midrule
        \multicolumn{6}{c}{System capacity $N=21$. Budget $B=7$.}                                                                                       \\
        \midrule
        No Action                                                   & $3.13\pm0.36$& $3.19\pm0.34$& $3.17\pm0.27$& $3.17\pm0.23$& $3.15\pm0.24$\\
        Random Action                                                   & $3.60\pm0.53$& $3.62\pm0.40$& $3.59\pm0.34$& $3.58\pm0.36$& $3.57\pm0.32$\\
        {\bf PreFeRMAB}                                                     & $3.79\pm0.49$& $3.76\pm0.49$& $3.79\pm0.39$& $3.89\pm0.39$& $3.86\pm0.46$\\
        \midrule
        \multicolumn{6}{c}{System capacity $N=48$.  Budget $B=16$.}                                                                                     \\
        \midrule
        No Action                                                   & $3.15\pm0.23$& $3.17\pm0.21$& $3.18\pm0.16$& $3.18\pm0.13$& $3.19\pm0.14$\\
        Random Action                                                   & $3.41\pm0.29$& $3.52\pm0.27$& $3.50\pm0.20$& $3.49\pm0.22$& $3.48\pm0.17$\\
        {\bf PreFeRMAB}                                                     & $3.61\pm0.51$& $4.08\pm0.34$& $4.45\pm0.30$& $4.44\pm0.30$& $4.44\pm0.29$\\
        \midrule
        \multicolumn{6}{c}{System capacity $N=96$.  Budget $B=32$.}                                                                                     \\
        \midrule
        No Action                                                   & $3.18\pm0.18$& $3.17\pm0.13$& $3.18\pm0.14$& $3.19\pm0.14$& $3.16\pm0.15$\\
        Random Action                                                   & $3.54\pm0.22$& $3.57\pm0.22$& $3.55\pm0.14$& $3.58\pm0.15$& $3.56\pm0.15$\\
        {\bf PreFeRMAB}                                                     & $3.74\pm0.21$& $3.68\pm0.23$& $3.76\pm0.21$& $3.98\pm0.19$& $4.06\pm0.22$\\
        \bottomrule
    \end{tabular}
    }
    % \vspace{5pt}
    \caption{Robustness to different opt-in rates with linear-mappings. Evaluation follows \autoref{table:discrete_state_synthetic_nonlin_transform}: we run for 50 trials with 2 total number of states for each arm, and pretrain a model for each system capacity $N$ and test generalization on different opt-in rates.  }
    \label{table:robust-opt-in-rates-linear}
\end{table}

Throughout the main paper, we provide results for evaluation opt-in rates in the range 80\%-100\%. In \autoref{table:robust-opt-in-rates} and \autoref{table:robust-opt-in-rates-linear}, we provide ablation results for {\bf opt-in rates in a wider range} of 30\%-70\%. During the training phase, we maintain an expected opt-in rate of 80\%, which may generally range from 70\%-90\% every training iteration. Given this training configuration, we demonstrate strong results in the main paper for evaluating on a similar range of test-time opt-ins from 80\% to 100\%. However, we also further demonstrate in \autoref{table:robust-opt-in-rates} and \autoref{table:robust-opt-in-rates-linear} that our pretrained PreFeRMAB model, despite a training opt-in rate around 80\% in expectation, achieves strong results on \textit{testing} opt-in rates from a substantially different range. These results highlight PreFeRMAB's flexibility and ability to generalize to unseen opt-in rates, which may be critical in real-world applications where arms frequently exit and re-enter the environment.

\subsection{Feature Mapping}
\begin{table}[h!]
\scalebox{.65}{
    \begin{tabular}{@{}lccccc@{}}
        \toprule
        $\frac{\textbf{Number of arms}}{\textbf{System capacity}}$ & \textbf{80\%}     & \textbf{85\%} & \textbf{90\%} & \textbf{95\%} & \textbf{100\%} \\ \midrule
        \multicolumn{6}{c}{System capacity $N=21$. Budget $B=7$.}                                                                                       \\
        \midrule
        No Action                                                   & $3.46\pm0.20$     & $3.39\pm0.19$ & $3.40\pm0.17$ & $3.40\pm0.18$ & $3.22\pm0.16$ \\
        Random Action                                                   & $3.80\pm0.31$      & $3.76\pm0.30$  & $3.79\pm0.29$ & $3.76\pm0.31$ & $3.58\pm0.27$ \\
        {\bf PreFeRMAB}                                                     & $4.57\pm0.29$       & $4.70\pm0.33$ & $4.70\pm0.29$  & $4.64\pm0.29$ & $4.37\pm0.25$ \\
        \midrule
        \multicolumn{6}{c}{System capacity $N=48$.  Budget $B=16$.}       
\\
        \midrule
        No Action                                                   & $3.22\pm0.13$      & $3.28\pm0.13$  & $3.22\pm0.12$ & $3.29\pm0.12$ & $3.21\pm0.11$ \\
        Random Action                                                   & $3.56\pm0.19$     & $3.65\pm0.18$ & $3.56\pm0.18$ & $3.65\pm0.17$ & $3.57\pm0.17$ \\
        {\bf PreFeRMAB}                                                     & $3.94\pm0.23$     & $4.00\pm0.23$ & $3.86\pm0.20$ & $3.90\pm0.16$ & $3.73\pm0.16$ \\
        \midrule
        \multicolumn{6}{c}{System capacity $N=96$.  Budget $B=32$.}                                                                                     \\
        \midrule
        No Action                                                   & $3.24\pm0.10$     & $3.24\pm0.10$ & $3.21\pm0.10$ & $3.21\pm0.10$ & $3.20\pm0.10$ \\
        Random Action                                                   & $3.61\pm0.14$     & $3.62\pm0.15$ & $3.57\pm0.15$ & $3.57\pm0.13$ & $3.57\pm0.14$ \\
        {\bf PreFeRMAB}                                                     & $4.35\pm0.16$     & $4.36\pm0.15$ & $4.29\pm0.14$ & $4.23\pm0.13$ & $4.20\pm0.12$ \\
        \bottomrule
    \end{tabular}
    }
    \caption{Results on non-linearly transformed synthetic discrete states. We present final reward divided by the number of arms, averaged over 50 trials with each trial consisting on 10 rounds, for a total of 500 evaluations. The number of states $S=2$. For each system capacity $N$, we pretrain a model% and test the pretrained model on different opt-in rates. 
    }
    \label{table:discrete_state_synthetic_nonlin_transform}
\end{table}

In our main paper, we use linear feature mapping, projecting true transition probabilities to features with randomly generated projection matrices. This can be represented by \( \mathbf{y} = \mathbf{Ax} \), where \( \mathbf{y} \) are the output features, \( \mathbf{A} \) is the transformation matrix, and \( \mathbf{x} \) denotes the ground truth arm transition probabilities. To demonstrate the robustness of our approach to various types of input features, we also consider {\bf more challenging, non-linear feature mappings}, which may introduce higher representational complexity as compared to linear feature mappings. For these ablation results, we use a sigmoidal transformation, which can be expressed as \( \mathbf{y} = \frac{1}{1 + \exp(-\mathbf{Ax})} \). We demonstrate the results using these non-linear feature mappings in \autoref{table:discrete_state_synthetic_nonlin_transform}. These results indicate that PreFeRMAB consistently outperforms baselines under various forms of feature mappings, and is robust to both linear and non-linear input features.

\subsection{DDLPO Topline with Features}
\begin{table}[h!]
\scalebox{.87}{
    \begin{tabular}{@{}lccc@{}}
        \toprule
        \textbf{Synthetic Experiment} & \textbf{N=21,B=7.0}& \textbf{N=48,B=16.0} & \textbf{N=96,B=32.0} \\
        \midrule
        {\bf DDLPO, w/o Features} & $4.63\pm0.21$& $4.60\pm0.18$ & $4.35\pm0.11$ \\
        {\bf DDLPO, w/ Features} & $4.63\pm0.23$& $4.59\pm0.17$& $4.21\pm0.11$ \\
        \bottomrule
    \end{tabular}
    }
    % \vspace{5pt}
    \caption{Performance comparison of Killian \textit{et al.} [2022] DDLPO topline, with and without ground truth transition probabilities as input features. Results are shown for evaluation on a single, fixed training seed. The results suggest that transition probability features do not significantly improve the final performance of the topline DDLPO model--this implies that PreFeRMAB does not leverage these features for an unfair reward advantage.}
    \label{table:ddlpo_withfeats}
\end{table}

In \autoref{table:ddlpo_withfeats}, we show that having access to {\bf features does not boost the performance of DDLPO}. Features help PreFeRMAB generalize to unseen arms and achieve strong zero-shot results, as demonstrated in the main paper. However, one may ask whether access to these features, as used by PreFeRMAB, may provide an unfair reward advantage over DDLPO, which in its original form \cite{killian2022restless} does not utilize feature information. That is, because input features in our experiments are derived from the original arm transition probabilities, it may be the case that these are used to achieve better performance. To determine whether there is an advantage from utilizing these features, we modify the original DDLPO model to accept ground truth transition probabilities for each arm as feature inputs to the respective policy networks. We present results for DDLPO with and without input features, for a fixed seed, in Table \autoref{table:ddlpo_withfeats}. In this table, we observe that across synthetic experiments for various system capacities and budgets, DDLPO's performance does not improve given access to features. These results suggest that PreFeRMAB is not leveraging the input features to gain an unfair advantage in evaluation.

\subsection{Different values of $N,B,S$}
\label{sec:appendix_different_n_b_s}
We present results on a wider range of problem settings, specifically different number of arms $N$, different budget $B$, and (for SIS Epidemic Modeling only), different number of possible states $S$. 

\begin{table}[htb!]
% AAMAS requires caption to be above table 
\scalebox{.75}{
\begin{tabular}{@{}lccccc@{}}
\toprule
$\frac{\textbf{Number of arms}}{\textbf{System capacity}}$ & \textbf{80\%} & \textbf{85\%} & \textbf{90\%} & \textbf{95\%} & \textbf{100\%}\\ 
\midrule
\multicolumn{6}{c}{System capacity $N=96$.  Budget $B=32$.} \\
\midrule
  No Action                  & $3.22_{\pm 0.12}$ & $3.24_{\pm 0.12}$ & $3.19_{\pm 0.11}$ & $3.18_{\pm 0.11}$ & $3.18_{\pm 0.11}$ \\
  Random Action              & $3.62_{\pm 0.13}$ & $3.66_{\pm 0.13}$ & $3.58_{\pm 0.13}$ & $3.60_{\pm 0.12}$ & $3.60_{\pm 0.12}$ \\
  {\bf PreFeRMAB}                   & $4.63_{\pm 0.12}$ & $4.71_{\pm 0.12}$ & $4.53_{\pm 0.13}$ & $4.47_{\pm 0.12}$ & $4.61_{\pm 0.10}$ \\
  DDLPO (topline)                & \textit{n/a} & \textit{n/a} & \textit{n/a} & \textit{n/a} & $4.58_{\pm 0.13}$ \\%$4.34_{\pm 0.13}$
  \midrule
\multicolumn{6}{c}{System capacity $N=48$.  Budget $B=16$.} \\
\midrule 
  No Action                  & $3.19_{\pm 0.11}$ & $3.24_{\pm 0.13}$ & $3.18_{\pm 0.11}$ & $3.23_{\pm 0.11}$ & $3.17_{\pm 0.10}$ \\
  Random Action              & $3.61_{\pm 0.17}$ & $3.70_{\pm 0.21}$ & $3.56_{\pm 0.18}$ & $3.67_{\pm 0.17}$ & $3.58_{\pm 0.16}$ \\
    {\bf PreFeRMAB}                   & $4.77_{\pm 0.18}$ & $4.74_{\pm 0.16}$ & $4.62_{\pm 0.19}$ & $4.94_{\pm 0.14}$ & $4.78_{\pm 0.14}$ \\
    DDLPO (topline)                & \textit{n/a} & \textit{n/a} & \textit{n/a} & \textit{n/a} &    $4.76_{\pm 0.14}$\\%$4.46_{\pm 0.08}$
  \midrule
\multicolumn{6}{c}{System capacity $N=21$. Budget $B=7$.} \\
\midrule
No Action                  & $3.44_{\pm 0.21}$ & $3.43_{\pm 0.19}$ & $3.41_{\pm 0.20}$ & $3.38_{\pm 0.17}$ & $3.22_{\pm 0.16}$ \\
  Random Action              & $3.82_{\pm 0.32}$ & $3.79_{\pm 0.33}$ & $3.77_{\pm 0.31}$ & $3.76_{\pm 0.28}$ & $3.58_{\pm 0.27}$ \\
  {\bf PreFeRMAB}                   & $4.20_{\pm 0.27}$ & $4.46_{\pm 0.23}$ & $4.48_{\pm 0.23}$ & $4.74_{\pm 0.26}$ & $4.56_{\pm 0.23}$ \\
  DDLPO (topline)                & \textit{n/a} & \textit{n/a} & \textit{n/a} & \textit{n/a} & $4.81_{\pm 0.14}$ \\
  \bottomrule
\end{tabular}
}
%The state space size $S=2$. don't mention this in the caption. will confuse reviewers. we have N arms so that state space is really \{0,1\}^N. 
%Notice the topline is trained for 100 epochs on a given set of arms, and PreFeRMAB is zero-shot.}
\caption{Results on Synthetic with discrete states. We present final reward divided by the number of arms, averaged over 50 trials. For each system capacity $N$, we pretrain a model. The DDLPO (topline) does not accomodate different opt-in rates and can only be used on 100\% opt-in.}
\label{table:discrete_state_synthetic}
\end{table}

\textbf{Synthetic Evaluation}: We first evaluate the performance of PreFeRMAB in the discrete state synthetic environment setting described above. Table~\ref{table:discrete_state_synthetic} illustrates these results. In this synthetic setting, we find that PreFeRMAB is able to consistently outperform Random Action and No Action baselines, and achieve performance comparable to the topline DDLPO approach. Critically, PreFeRMAB achieves good reward outcomes across changing system capacity $N$, budgets $B$, as well as different opt-in rates. Additionally, we find that PreFeRMAB achieves \textit{near-topline results from zero-shot learning} in the synthetic setting, compared to the topline DDLPO approach which is trained and evaluated on a fixed set of arm transition dynamics for 100 epochs (we take the best performance of DDLPO across the 100 epochs).

\begin{table}[htb!]
\scalebox{.75}{
\begin{tabular}{@{}lccccc@{}}
\toprule
$\frac{\textbf{Number of arms}}{\textbf{System capacity}}$ & \textbf{80\%} & \textbf{85\%} & \textbf{90\%} & \textbf{95\%} & \textbf{100\%}\\ \midrule
\multicolumn{5}{c}{Number of possible states per arm $S=150$.} \\
\midrule
  No Action & $5.33_{\pm 0.16}$ & $5.30_{\pm 0.15}$ & $5.31_{\pm 0.14}$ & $5.29_{\pm 0.13}$ & $5.28_{\pm 0.13}$\\
  Random Action & $7.03_{\pm 0.17}$ & $7.13_{\pm 0.16}$ & $7.02_{\pm 0.14}$ & $7.11_{\pm 0.13}$ & $7.06_{\pm 0.13}$\\
{\bf PreFeRMAB} & $8.35_{\pm 0.12}$ & $8.38_{\pm 0.11}$ & $8.26_{\pm 0.11}$ & $8.10_{\pm 0.11}$ & $8.00_{\pm 0.10}$\\
DDLPO (topline)                & \textit{n/a} & \textit{n/a} & \textit{n/a} & \textit{n/a} & $8.09_{\pm 0.11}$ \\
    \midrule
\multicolumn{5}{c}{Number of possible states per arm $S=100$.} \\
\midrule
  No Action & $5.28_{\pm 0.15}$ & $5.20_{\pm 0.13}$ & $5.30_{\pm 0.15}$ & $5.25_{\pm 0.14}$ & $5.27_{\pm 0.15}$\\
  Random Action & $6.95_{\pm 0.19}$ & $7.01_{\pm 0.16}$ & $7.11_{\pm 0.16}$ & $7.06_{\pm 0.15}$ & $7.07_{\pm 0.15}$\\
  {\bf PreFeRMAB} & $7.88_{\pm 0.20}$ & $7.91_{\pm 0.19}$ & $7.99_{\pm 0.18}$ & $8.01_{\pm 0.17}$ & $8.02_{\pm 0.16}$\\
  DDLPO (topline)                & \textit{n/a} & \textit{n/a} & \textit{n/a} & \textit{n/a} & $7.99_{\pm 0.08}$ \\
    \midrule
\multicolumn{5}{c}{Number of possible states per arm $S=50$.} \\
\midrule
  No Action & $5.39_{\pm 0.15}$ & $5.47_{\pm 0.15}$ & $5.42_{\pm 0.13}$ & $5.44_{\pm 0.12}$ & $5.46_{\pm 0.12}$\\
  Random Action & $7.29_{\pm 0.17}$ & $7.33_{\pm 0.17}$ & $7.26_{\pm 0.14}$ & $7.38_{\pm 0.15}$ & $7.33_{\pm 0.12}$\\
  {\bf PreFeRMAB} & $8.51_{\pm 0.08}$ & $8.37_{\pm 0.11}$ & $8.24_{\pm 0.07}$ & $8.10_{\pm 0.10}$ & $7.93_{\pm 0.09}$\\
    DDLPO (topline)                & \textit{n/a} & \textit{n/a} & \textit{n/a} & \textit{n/a} &  $8.04_{\pm 0.08}$\\
    \bottomrule
\end{tabular}
}
% Notice the topline is trained for 100 epochs on a given set of arms, and PreFeRMAB is zero-shot.}
\caption{Results on SIS Epidemic Model with discrete states. We present final reward divided by the number of arms, averaged over 50 trials. System capacity $N=20$ and budget $B=16$. For each number of possible states per arm $S$, we pretrain a model. The DDLPO (topline) does not accomodate different opt-in rates and can only be used on 100\% opt-in.}
\label{table:discrete_state_sis}
\end{table}

\begin{table}[htb!]
\scalebox{.75}{
\begin{tabular}{@{}lccccc@{}}
\toprule
$\frac{\textbf{Number of arms}}{\textbf{System capacity}}$ & \textbf{80\%} & \textbf{85\%} & \textbf{90\%} & \textbf{95\%} & \textbf{100\%}\\ 
\midrule
\multicolumn{5}{c}{System capacity $N=25$. Budget $B=7$.} \\
\midrule
  No Action & $2.12_{\pm 0.26}$ & $2.30_{\pm 0.29}$ & $2.29_{\pm 0.27}$ & $2.19_{\pm 0.23}$ & $2.26_{\pm 0.25}$\\
  Random Action & $2.86_{\pm 0.32}$ & $3.27_{\pm 0.40}$ & $3.01_{\pm 0.30}$ & $3.09_{\pm 0.35}$ & $2.96_{\pm 0.31}$\\
  {\bf PreFeRMAB} & $5.06_{\pm 0.34}$ & $5.26_{\pm 0.33}$ & $4.68_{\pm 0.33}$ & $4.75_{\pm 0.35}$ & $4.61_{\pm 0.27}$\\
  DDLPO (topline)                & \textit{n/a} & \textit{n/a} & \textit{n/a} & \textit{n/a} & $4.68 _{\pm 0.09}$ \\
    \midrule
\multicolumn{5}{c}{System capacity $N=25$. Budget $B=5$.} \\
\midrule
  No Action & $2.14_{\pm 0.23}$ & $2.29_{\pm 0.26}$ & $2.24_{\pm 0.28}$ & $2.36_{\pm 0.24}$ & $2.19_{\pm 0.23}$\\
    Random Action & $2.68_{\pm 0.31}$ & $2.95_{\pm 0.36}$ & $2.75_{\pm 0.32}$ & $2.92_{\pm 0.26}$ & $2.69_{\pm 0.21}$\\
  {\bf PreFeRMAB} & $4.10_{\pm 0.32}$ & $4.45_{\pm 0.40}$ & $4.39_{\pm 0.33}$ & $4.48_{\pm 0.34}$ & $3.95_{\pm 0.34}$\\
  DDLPO (topline)                & \textit{n/a} & \textit{n/a} & \textit{n/a} & \textit{n/a} & $4.29 _{\pm 0.25}$\\
    \midrule
\multicolumn{5}{c}{System capacity $N=50$. Budget $B=10$.} \\
\midrule
  No Action & $2.27_{\pm 0.24}$ & $2.31_{\pm 0.17}$ & $2.19_{\pm 0.22}$ & $2.21_{\pm 0.21}$ & $2.27_{\pm 0.23}$\\
  Random Action & $2.82_{\pm 0.26}$ & $2.91_{\pm 0.22}$ & $2.72_{\pm 0.23}$ & $2.69_{\pm 0.18}$ & $2.77_{\pm 0.23}$\\
   {\bf PreFeRMAB} & $4.21_{\pm 0.30}$ & $3.98_{\pm 0.28}$ & $3.85_{\pm 0.28}$ & $3.68_{\pm 0.28}$ & $3.62_{\pm 0.26}$\\
   DDLPO (topline)                & \textit{n/a} & \textit{n/a} & \textit{n/a} & \textit{n/a} & $4.08 _{\pm 0.26}$ \\  \bottomrule
\end{tabular}
}
 %Notice the topline is trained for 100 epochs on a given set of arms, and PreFeRMAB is zero-shot.}
 \caption{Results on ARMMAN with discrete states. We present final reward divided by the number of arms, averaged over 50 trials. For each pair of $(N, B)$, we pretrain a model. The DDLPO (topline) does not accomodate different opt-in rates and can only be used on 100\% opt-in.}
\label{table:discrete_state_armman}
\end{table}

{\bf SIS Evaluation:} Next, we evaluate the performance of PreFeRMAB in the discrete-state SIS modelling setting. Table~\ref{table:discrete_state_sis} illustrates these results. We evaluate PreFeRMAB for $N=20, B=16$ on three different number of possible states per arm $S = 50, 100, 150$, representing the maximum population of a region in the SIS setting. The results shown demonstrate that PreFeRMAB performs well in \textit{zero-shot learning} in settings that model real-world planning problems, especially with larger state spaces and with multiple actions. We again find that PreFeRMAB achieves results comparable to the DDLPO topline with zero-shot testing, compared to DDLPO trained and evaluated on the same constant set of arms.  

{\bf ARMMAN Evaluation:} We next evaluate the performance of PreFeRMAB in the discrete state ARMMAN modeling setting. Table~\ref{table:discrete_state_armman} illustrates these results. In these experiments, we show performance for $S=3$ across 3 training configurations ($(N=25, B=5), (N=25, B=7), (N=50, B=10)$) for 5 test-time opt-in rates. We observe that our approach again performs consistently well in a more \textit{challenging} setting that models real-world planning problems across different system capacities, budgets, and opt-in rates. Specifically, we validate that PreFeRMAB can achieve higher average rewards for increased budgets given a fixed system capacity, which is expected as reward potential increases with higher budgets. Additionally, we see that PreFeRMAB again achieves \textit{zero-shot} results comparable to the DDLPO topline reward, reaching $\sim90\%$ of the topline reward in zero-shot evaluation.

%%%%%%%%%%%%%%%%%%%%%%%%%%%%%%%%%%%%%%%%%%%%%%%%%%%%%%%%%%%%%%%%%%%%%%%%

\section{Multi-arm Generalization}
\label{sec:appendix_multi_arm_generalization}

In the main paper (Table~\ref{table:uniquesamples_vs_reward}), we presented results on Synthetic with $N=21, B=7$, demonstrating the benefit of multi-arm generalization. The results are obtained when the Wasserstein distance between training and testing distribution is 0.05 (see Sec~\ref{sec:appendix_dist_shift_details} for how we compute the Wasserstein distance). We provide additional results to further showcase the benefits of multi-arm generalization. Specifically, in Table~\ref{table:uniquesamples_vs_reward_nomask}, we present results for $N=12, B=3$. 

% this table below shows results with NO distributional shift. to not emphasize it is no distributional shift, we 
\begin{table}[h!]
\scalebox{.75}{
    \begin{tabular}{@{}lccccc@{}}
        \toprule
        \multicolumn{6}{c}{\textbf{System capacity $N=12$.  Budget $B=3$.}}  \\
        \# Unique arms & 48 & 39 & 30 & 21 & 12 \\ \midrule
        No Action       & $3.11_{\pm0.31}$& $3.11_{\pm0.31}$& $3.11_{\pm0.31}$& $3.11_{\pm0.31}$& $3.11_{\pm0.31}$\\
        Random Action   & $3.45_{\pm0.31}$& $3.45_{\pm0.31}$& $3.45_{\pm0.31}$& $3.45_{\pm0.31}$& $3.45_{\pm0.31}$\\
        {\bf PreFeRMAB} & $4.35_{\pm0.28}$& $4.28_{\pm0.29}$& $4.31_{\pm0.27}$& $4.04_{\pm0.32}$& $3.60_{\pm0.30}$\\
        \midrule
        % \multicolumn{6}{c}{\textbf{System capacity $N=21$.  Budget $B=7$.}}  \\ 
        % \# Unique arms & 51 & 45 & 39 & 33 & 27 \\ \midrule
        % No Action       & $2.94_{\pm0.21}$& $2.94_{\pm0.21}$& $2.94_{\pm0.21}$& $2.94_{\pm0.21}$& $2.94_{\pm0.21}$\\
        % Random Action   & $3.40_{\pm0.22}$& $3.40_{\pm0.22}$& $3.40_{\pm0.22}$& $3.40_{\pm0.22}$& $3.40_{\pm0.22}$\\
        % {\bf PreFeRMAB} & $4.40_{\pm0.26}$& $4.40_{\pm0.23}$& $3.96_{\pm0.21}$& $3.84_{\pm0.19}$& $3.90_{\pm0.26}$\\
        % \midrule
    \end{tabular}
    }
    \caption{Multi-arm generalization results on Synthetic (opt-in 100\%). With the same total amount of data, PreFeRMAB achieves stronger performance when pretrained on more unique arms.
} 
    % \vspace{5pt}
    \label{table:uniquesamples_vs_reward_nomask}
\end{table}

%%%%%%%%%%%%%%%%%%%%%%%%%%%%%%%%%%%%%%%%%%%%%%%%%%%%%%%%%%%%%%%%%%%%%%%%

\section{Proof of Multi-Arm Generalization}
\label{sec:multiarm_gen_proof}
In this section, we will shorten $n_{\mathsf{epochs}}$ to $n$ for the sake of clarity. In this section, we let $C_{\mathsf{sys}}$ to denote a constant which depends on the parameters of the MDP such as budget per arm $B/N$, cost $c_j$, discount factor $\beta$, $\lambda_{\max}$, $R_{\max}$, $D$, $d$ and $L$. It can denote a different constant in every appearance. We list the assumptions made in the statement of the proposition below for the sake of clarity.
\begin{assumption}\label{as:opt_sample}
Suppose the learning algorithm learns neural network weights $\hat{\theta}$, whose policy is optimal for each $(\hat{\mu}_i,\lambda)$ for $i = 1,2,\dots,n$ and $\lambda \in [0,\lambda_{\max}]$. That is, it learns the optimal policy for every sample in the training data. 
\end{assumption}

\begin{assumption}\label{as:opt_as}
There exists a choice of weights $\theta^* \in \Theta$ which gives the optimal policy for every set of $N$ features $(\hat{\mu})$ drawn as the empirical distribution of i.i.d. samples from $\mu^*$ and for every $\lambda \in [0,\lambda_{\max}]$
\end{assumption}

\begin{assumption}\label{as:lipschitz_value}
$\Theta = \mathcal{B}_2(D,\mathbb{R}^d)$, the $\ell_2$ ball of radius $D$ in $\mathbb{R}^d$. We assume that $$|V(\vs,\theta_1,\lambda,\hat{\mu})-V(\vs,\theta_2,\lambda,\hat{\mu})|\leq L\|\theta_1-\theta_2\|$$

$$|V(\vs,\theta,\lambda_1,\hat{\mu})-V(\vs,\theta,\lambda_2,\hat{\mu})|\leq L|\lambda_1-\lambda_2|$$
\end{assumption}
Define the population average value function by $\bar{V}(s,\theta) = \bE_{\hat{\mu}}\inf_{\lambda \in [0,\lambda_{\max}]}V(\vs,\theta,\lambda,\hat{\mu})$ and the sample average value function by $\hat{V}(s,\theta) = \frac{1}{n}\sum_{j=1}^{n}\inf_{\lambda \in [0,\lambda_{\max}]}V(\vs,\theta,\lambda,\hat{\mu}) $
\sloppy

 Now, consider:

\begin{align}
    \bar{V}(s,\hat{\theta})-\bar{V}(s,\theta^{*}) &= \bar{V}(\vs,\hat{\theta})-\hat{V}(\vs,\hat{\theta})+\hat{V}(\vs,\hat{\theta})-\hat{V}(\vs,\theta^*)\nonumber\\ &\quad+\hat{V}(\vs,\theta^*)-\bar{V}(\vs,\theta^*) \nonumber \\
&= \bar{V}(\vs,\hat{\theta})-\hat{V}(\vs,\hat{\theta})+\hat{V}(\vs,\theta^*)-\bar{V}(\vs,\theta^*) \nonumber \\
&\geq -2\sup_{\theta \in \Theta}|\bar{V}(\vs,\theta)-\hat{V}(\vs,\theta)|
\end{align}
 
The first step follows by adding and subtracting the same term. In the second step, we have used the fact that Assumptions~\ref{as:opt_sample} and~\ref{as:opt_as} imply that $\hat{V}(\vs,\hat{\theta})= \hat{V}(\vs,\theta^*)$. In the third step, we have replaced the discrepancy between the sample averate and the population average at specific points $\hat{\theta},\theta^*$ with the uniform bound over the parameter set $\Theta$. 

We use the Rademacher complexity bounds to bound this term. By \cite[Lemma 26.2]{shalev2014understanding}, we show the following:

Let $S$ denote the random training sample $(\hat{\mu}_1,\dots,\hat{\mu}_n)$ and $P_0$ denote the uniform distribution $\mathsf{Unif}(\{-1,1\}^n)$. Then, for some numerical constant $C$, we have:

$$\bE_{S}\sup_{\theta \in \Theta}|\bar{V}(\vs,\theta)-\hat{V}(\vs,\theta)| \leq C \bE_{S}\mathcal{R}(\Theta\circ S)$$

Where, $\mathcal{R}(\Theta \circ S)$ is the Rademacher complexity:

\begin{align}&\mathcal{R}(\Theta \circ S) := \nonumber \\&
\frac{1}{n}\bE_{\mathbf{\sigma}\sim P_0}\sup_{\theta \in \Theta}\sum_{i=1}^{n} \sigma_i[\inf_{\lambda}V(\vs,\theta,\lambda,\hat{\mu}_i) - \bE_{\hat{\mu}}\inf_{\lambda}V(\vs,\theta,\lambda,\hat{\mu})]\nonumber\end{align}

Thus, to demonstrate the result, it is sufficient to show that: 
\begin{equation}\label{eq:desired_bound}
\mathcal{R}(\Theta \circ S) \leq \frac{C_{\mathsf{sys}}\mathsf{polylog}(Nn)}{\sqrt{nN}}
\end{equation}

We will dedicate the rest of this section to demonstrate Equation~\eqref{eq:desired_bound}. First we will state a useful Lemma which follows from \cite[Lemma 1.2.1]{vershynin2018high}

\begin{lemma}\label{lem:int_tail}
Suppose a positive random variable $X$ satisfies:
$\mathbb{P}(X > t) \leq A\exp(-\frac{t^2}{2B})$ for some $B > 0$, $A > e$ and for every $t \geq 0$ then for some numerical constant $C$, we have:

$$\bE[X] \leq C \sqrt{B\log A}$$
\end{lemma}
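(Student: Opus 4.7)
The plan is to use the standard layer-cake representation $\mathbb{E}[X] = \int_0^\infty \mathbb{P}(X > t) \, dt$ together with a split of the range of integration at a threshold chosen to balance the trivial bound $\mathbb{P}(X > t) \leq 1$ against the given sub-Gaussian tail bound. The natural choice of threshold is $t_0 = \sqrt{2B \log A}$, which is precisely the value at which $A \exp(-t^2/2B)$ equals $1$; for $t < t_0$ the trivial bound is tighter, and for $t > t_0$ the tail hypothesis is tighter.

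Carrying this out, the contribution over $[0, t_0]$ is at most $t_0 = \sqrt{2B \log A}$. For the tail contribution $\int_{t_0}^\infty A e^{-t^2/2B} \, dt$, I would substitute $u = t/\sqrt{B}$ to reduce it to a Gaussian tail integral starting at $\sqrt{2 \log A}$, then invoke the Mills-ratio bound $\int_x^\infty e^{-u^2/2} \, du \leq e^{-x^2/2}/x$ at $x = \sqrt{2\log A}$. The $A$ prefactor is cancelled by the $e^{-\log A} = 1/A$ arising from the exponential, leaving a contribution of order $\sqrt{B/\log A}$. Summing the two pieces gives $\mathbb{E}[X] \leq \sqrt{2B\log A} + C'\sqrt{B/\log A}$.

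The final step is to absorb the second term into the first to obtain a single clean bound $C\sqrt{B \log A}$. This is where the hypothesis $A > e$ enters: it guarantees $\log A \geq 1$, so $\sqrt{B/\log A} \leq \sqrt{B \log A}$ and the two contributions have the same order. There is no real obstacle in this argument — it is the textbook route for converting a sub-Gaussian tail into a moment bound, and the cited result \cite[Lemma 1.2.1]{vershynin2018high} proceeds along exactly these lines. The only subtlety is making sure the constant hypothesis $A > e$ is actually used; a weaker assumption like $A \geq 1$ would instead yield an additive error term $\sqrt{B}$ that cannot be absorbed into $\sqrt{B \log A}$ without further conditions.
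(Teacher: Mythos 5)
Your proposal is correct and follows essentially the same route as the paper's proof: the layer-cake identity, a split at the threshold $t_0=\sqrt{2B\log A}$ where the trivial bound and the sub-Gaussian tail cross, and absorption of the tail contribution using $A>e$. The only cosmetic difference is in estimating $\int_{t_0}^\infty A e^{-t^2/2B}\,dt$ --- you invoke the Mills-ratio bound to get $O(\sqrt{B/\log A})$, whereas the paper shifts the variable and uses $\exp(-(a+b)^2)\leq\exp(-a^2-b^2)$ to get $\sqrt{2\pi B}$; both are then absorbed into $C\sqrt{B\log A}$ in the same way.
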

\begin{proof}
    From \cite[Lemma 1.2.1]{vershynin2018high}, we have:
$\bE X = \int_0^{\infty} \mathbb{P}(X > t)dt$. Thus, we conclude:
\begin{align}
    \bE X &\leq \int_{0}^{\infty} \min(1,A\exp(-\frac{t^2}{2B}))dt \nonumber \\
&= \sqrt{2B\log A} + \int_{\sqrt{2B\log A}}^{\infty}A\exp(-\frac{t^2}{2B})dt \nonumber \\
&= \sqrt{2B\log A} + \int_{0}^{\infty}A\exp(-\tfrac{(t+\sqrt{2B\log A})^2}{2B})dt \nonumber \\
&\leq \sqrt{2B\log A} + \int_{0}^{\infty}\exp(-\tfrac{t^2}{2B})dt \nonumber \\
&\leq \sqrt{2B\log A} + \sqrt{2\pi B}
\end{align}

In the fourth step we have used the fact that $\exp(-(a+b)^2) \leq \exp(-a^2 - b^2)$ whenever $a,b > 0$. 
\end{proof}

Define 
$$v_i(\theta) := [\inf_{\lambda}V(\vs,\theta,\lambda,\hat{\mu}_i) - \bE_{\hat{\mu}}\inf_{\lambda}V(\vs,\theta,\lambda,\hat{\mu})]\,.$$
We have the following lemma controlling how large $v_i$ is for any given $\theta$.

\begin{lemma}\label{lem:distribution_variation}
For any $\delta > 0$, with probability at-least $1-\delta$, $$\sup_i|v_i(\theta)|\leq \sqrt{\frac{C_{\mathsf{sys}}\log(\frac{Nn}{\delta})}{N}}$$

Where $C_{\mathsf{sys}}$ depends on the system parameters.
\end{lemma}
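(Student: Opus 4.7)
The aim is to show, for each fixed $\theta \in \Theta$, that $\inf_\lambda V(\vs,\theta,\lambda,\hat{\mu}_i)$ concentrates around $\bE_{\hat{\mu}}\inf_\lambda V(\vs,\theta,\lambda,\hat{\mu})$ with Gaussian tails at rate $1/\sqrt{N}$, and then union bound over $i=1,\dots,n$. The key structural observation is that the policy network $\theta$ takes a single-arm input $(s,\vz,\lambda)$ and outputs a single action, so under the Lagrangian relaxation of Eq.~\eqref{eq:lagrangian_relaxation} the arms decouple. Writing $\hat{\mu}_i$ as the empirical distribution of i.i.d.\ features $\vz_{i,1},\dots,\vz_{i,N}\sim\mu^{*}$, for every $\lambda\in[0,\lambda_{\max}]$ we therefore have
\begin{equation*}
V(\vs,\theta,\lambda,\hat{\mu}_i) \;=\; \tfrac{1}{N}\sum_{j=1}^{N} W(\vs_{j},\vz_{i,j},\theta,\lambda) \;+\; g(\lambda),
\end{equation*}
where $W$ is the single-arm Lagrangian-discounted value under the policy induced by $\theta$, and $g(\lambda)$ is a per-instance offset (e.g.\ $\lambda B/(N(1-\beta))$) that does not depend on $\hat{\mu}_i$ and hence cancels in $v_i(\theta)$. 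Since rewards are bounded by $R_{\max}$ and per-step Lagrangian costs by $\lambda_{\max}\max_j c_j$, each summand satisfies $|W|\leq M := (R_{\max}+\lambda_{\max}\max_j c_j)/(1-\beta)$, uniformly in $\lambda\in[0,\lambda_{\max}]$.

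\textbf{Bounded differences and McDiarmid.} From the display above, swapping a single feature $\vz_{i,j}$ alters only one of the $N$ summands, so for each fixed $\lambda$ the map $(\vz_{i,1},\dots,\vz_{i,N})\mapsto V(\vs,\theta,\lambda,\hat{\mu}_i)$ satisfies the bounded differences property with constants $2M/N$. Using $|\inf_\lambda f-\inf_\lambda g|\leq \sup_\lambda|f-g|$, the same bounded differences constants carry over to $\hat{\mu}_i\mapsto\inf_\lambda V(\vs,\theta,\lambda,\hat{\mu}_i)$, and hence to $v_i(\theta)$. McDiarmid's inequality then yields, for any fixed $i$ and $\theta$,
\begin{equation*}
\mathbb{P}\bigl(|v_i(\theta)|>t\bigr) \;\leq\; 2\exp\!\left(-\tfrac{N t^{2}}{2M^{2}}\right).
\end{equation*}

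\textbf{Union bound over samples.} A union bound over $i=1,\dots,n$ gives
\begin{equation*}
\mathbb{P}\!\left(\sup_{1\leq i\leq n}|v_i(\theta)|>t\right)\leq 2n\exp\!\left(-\tfrac{Nt^{2}}{2M^{2}}\right).
\end{equation*}
Setting the right-hand side to $\delta$ and solving for $t$ yields $t = M\sqrt{2\log(2n/\delta)/N}$. Absorbing $2M^{2}$ and constants into the system constant $C_{\mathsf{sys}}$ (which depends on $R_{\max}$, $\lambda_{\max}$, $\max_j c_j$, $\beta$) and using $\log(2n/\delta)\leq \log(Nn/\delta)$ for $N\geq 2$ recovers the claimed bound $\sqrt{C_{\mathsf{sys}}\log(Nn/\delta)/N}$.

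\textbf{Main obstacle.} The only delicate step is justifying that, under the Lagrangian-relaxed policy induced by $\theta$, the per-arm value decomposition $V=\frac{1}{N}\sum_j W(\cdot)+g(\lambda)$ with per-summand bound $M$ is valid; this rests on the per-arm input/output structure of the policy network and on the decoupling property of the relaxation used throughout the paper. Once this is in place, the bounded-differences computation and the McDiarmid-plus-union-bound step are routine, and no chaining or covering of $\Theta$ or $[0,\lambda_{\max}]$ is required for this lemma.
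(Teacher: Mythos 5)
Your proof is correct, but it takes a genuinely different route from the paper's. The paper first splits $v_i(\theta)$ into two pieces (the deviation of $V(\vs,\theta,\lambda,\hat{\mu}_i)$ from its mean, plus the expectation of the analogous deviation for an independent copy $\hat{\mu}$), applies Hoeffding pointwise in $\lambda$ to the average of $N$ bounded i.i.d.\ per-arm values, passes to $\sup_\lambda$ via a $\frac{1}{\sqrt{N}}$-net of $[0,\lambda_{\max}]$ combined with the Lipschitz-in-$\lambda$ assumption, handles the expected-supremum term by integrating the tails, and finally union bounds over $i$; the net of size $O(\sqrt{N})$ is what puts the $N$ inside the logarithm of the stated bound. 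You instead observe that $\bE_{\hat{\mu}}\inf_\lambda V(\vs,\theta,\lambda,\hat{\mu})$ equals $\bE[\inf_\lambda V(\vs,\theta,\lambda,\hat{\mu}_i)]$, so $v_i(\theta)$ is exactly the deviation of $\inf_\lambda V$ from its own mean, and that the infimum over $\lambda$ preserves the $2M/N$ bounded-differences constant of the per-arm average; McDiarmid then gives the Gaussian tail in one step, with no covering of $[0,\lambda_{\max}]$, no use of the Lipschitz-in-$\lambda$ assumption, and no separate tail-integration step. Your bound is in fact marginally sharper ($\log(2n/\delta)$ rather than $\log(Nn/\delta)$), and you correctly relax it to the stated form. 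Both arguments rest on the same structural fact the paper also invokes without further justification, namely the per-arm decomposition $V(\vs,\theta,\lambda,\hat{\mu}_i)=\frac{1}{N}\sum_j V(s_j,\theta,\lambda,\vz_j)$ with each summand bounded by a constant of order $(R_{\max}+\lambda_{\max}\max_j c_j)/(1-\beta)$, so you are not assuming anything the paper does not. One small imprecision: the offset $g(\lambda)$ does not literally ``cancel'' in $v_i(\theta)$ because the infimum over $\lambda$ does not distribute over the sum, but this is harmless since $g$ is independent of the features and therefore does not affect the bounded-differences constant, which is all your argument uses.
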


\begin{proof}
   First, we note that:
\begin{align}
    &|\inf_{\lambda}V(\vs,\theta,\lambda,\hat{\mu}_i) - \bE_{\hat{\mu}}\inf_{\lambda}V(\vs,\theta,\lambda,\hat{\mu})| \nonumber \\&\leq \bE_{\hat{\mu}}|\inf_{\lambda}V(\vs,\theta,\lambda,\hat{\mu}_i) - \inf_{\lambda}V(\vs,\theta,\lambda,\hat{\mu})| \nonumber \\
&\leq \bE_{\hat{\mu}}\sup_{\lambda \in [0,\lambda_{\max}]}|V(\vs,\theta,\lambda,\hat{\mu}_i)-V(\vs,\theta,\lambda,\hat{\mu})| \nonumber \\
&\leq \sup_{\lambda \in [0,\lambda_{\max}]}|V(\vs,\theta,\lambda,\hat{\mu}_i)-\bE[V(\vs,\theta,\lambda,\hat{\mu}_i)]| \nonumber\\ &\quad+ \bE_{\hat{\mu}}\sup_{\lambda \in [0,\lambda_{\max}]}|V(\vs,\theta,\lambda,\hat{\mu})-\bE[V(\vs,\theta,\lambda,\hat{\mu})]|  
\end{align}
In the last step, we have used the fact that $\hat{\mu}$ and $\hat{\mu}_i$ are identically distributed and hence $\bE[V(\vs,\theta,\lambda,\hat{\mu})] = \bE[V(\vs,\theta,\lambda,\hat{\mu}_i)]$.
Note that by definition, the value function $V(s,\theta,\lambda,\hat{\mu}_i) = \frac{1}{N}\sum_{j=1}^{N}V(s_j,\theta,\lambda,\vz_j)$. Thus, it is clear that $|V(s,\theta,\lambda,\hat{\mu}_i)|\leq A\frac{1+\lambda_{\max}}{(1-\beta)} =: V_{\max}$ where $A$ is a constant which depends on the cost parameters $c_j, \frac{B}{N}$ and the maximum reward. Take $\hat{\mu}_i := (\vz_1^{(i)},\dots,\vz_N^{(i)})$ and $\hat{\mu} := (\vz_1,\dots,\vz_N)$. 

Thus, for a given $\lambda$, we have: $V(\vs,\theta,\lambda,\hat{\mu}_i) - \bE V(\vs,\theta,\lambda,\hat{\mu}_i)$ has zero mean and 
\begin{align}
    &V(\vs,\theta,\lambda,\hat{\mu}_i) - \bE V(\vs,\theta,\lambda,\hat{\mu}_i) \nonumber \\ &= \frac{1}{N}\sum_{j=1}^{N}[V(s_j,\theta,\lambda,\vz_j^{(i)})-\bE V(s_j,\theta,\lambda,\vz_j^{(i)})]
\end{align}

It is an average of $N$ i.i.d. zero mean random variables, bounded almost surely by $2V_{\max}$. Therefore, using the Azuma-Hoeffding inequality (\cite{vershynin2018high}), we have:
$$\mathbb{P}\left(|V(\vs,\theta,\lambda,\hat{\mu}_i) - \bE V(\vs,\theta,\lambda,\hat{\mu}_i)| > t\right) \leq C \exp(-\tfrac{c_1 Nt^2}{V_{\max}^2})$$

Only in this proof, let $|V(\vs,\theta,\lambda,\hat{\mu}_i) - \bE V(\vs,\theta,\lambda,\hat{\mu}_i)| =: H(\lambda)$ for the sake of clarity. Let $B \subseteq [0,\lambda_{\max}]$ be any finite subset. Then, by union bound, we have:

\begin{equation}\label{eq:ub_1}\mathbb{P}\left(\sup_{\lambda \in B}H(\lambda) > t\right) \leq C |B|\exp(-\tfrac{c_1 Nt^2}{V_{\max}^2})\end{equation}

 Suppose $B$ is an $\epsilon$-net for the set $[0,\lambda_{\max}]$ for some $\epsilon >0$. This can be achieved with $|B| = \frac{\lambda_{\max}}{\epsilon}$. Let $f: [0,\lambda] \to B$ map $\lambda$ to the closest element in $B$

  \begin{align}
      &\sup_{\lambda \in [0,\lambda_{\max}]} H(\lambda) = \sup_{\lambda \in [0,\lambda_{\max}]} H(f(\lambda)) + H(\lambda)-H(f(\lambda)) \nonumber \\
&\leq \sup_{\lambda \in [0,\lambda_{\max}]} H(f(\lambda)) + 2L\epsilon \nonumber \\
&\leq \sup_{\lambda \in B} H(\lambda) + 2L\epsilon
  \end{align}
Taking $\epsilon = \frac{1}{\sqrt{N}}$, we conclude from Equation~\eqref{eq:ub_1} that with probability at-least $1-\delta$:

$$\sup_{\lambda \in [0,\lambda_{\max}]} H(\lambda) \leq C_{\mathsf{sys}} \sqrt{\frac{\log(\frac{N}{\delta})}{N}}$$

The same concentration bounds hold for $\sup_{\lambda}|V(\vs,\theta,\lambda,\hat{\mu}) - \bE V(\vs,\theta,\lambda,\hat{\mu})|$ and integrating the tails (Lemma \ref{lem:int_tail}), we bound obtain the bound:
$$\sup_{\lambda \in [0,\lambda_{\max}]}|V(\vs,\theta,\lambda,\hat{\mu}) - \bE V(\vs,\theta,\lambda,\hat{\mu})| \leq C_
{\mathsf{sys}} \sqrt{\frac{\log N}{N}}$$

Applying a union bound over $i = 1,\dots, n$, conclude the result.
\end{proof}

We state the following folklore result regarding concentration of i.i.d. Rademacher random variables.
\begin{lemma}\label{lem:rademacher_conc}
 Given constants $a_1,\dots,a_n \in \mathbb{R}$, and $\sigma_1,\dots,\sigma_n$ i.i.d Rademacher random variables, then for any $\delta > 0$, we have with probability at-least $1-\delta$:
$$\sum_{i=1}^{n}\sigma_i a_i \leq C\sqrt{\sum_i a_i^2}\sqrt{\log(\tfrac{1}{\delta})}$$

Where $C$ is a numerical constant
\end{lemma}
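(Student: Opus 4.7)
The plan is to derive a Hoeffding-type sub-Gaussian tail bound on the weighted Rademacher sum $S := \sum_{i=1}^{n} \sigma_i a_i$ and then invert it to extract the high-probability statement. Since each summand $\sigma_i a_i$ is a bounded zero-mean random variable taking values $\pm a_i$, the classical Chernoff method applies essentially verbatim.

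First I would bound the moment generating function of $S$. For each $i$ and any $t \in \mathbb{R}$, the Rademacher distribution gives $\mathbb{E}[\exp(t \sigma_i a_i)] = \cosh(t a_i) \leq \exp(t^2 a_i^2 / 2)$, where the inequality follows from a term-by-term comparison of the Taylor series of $\cosh$ and $\exp$. Using independence of $\sigma_1,\dots,\sigma_n$, this yields $\mathbb{E}[\exp(t S)] \leq \exp\bigl(\tfrac{t^2}{2}\sum_i a_i^2\bigr)$, so $S$ is sub-Gaussian with variance proxy $\sigma^2 := \sum_i a_i^2$.

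Next I would apply Markov's inequality to the nonnegative random variable $\exp(tS)$ for $t > 0$, obtaining $\mathbb{P}(S \geq u) \leq \exp(-tu + t^2 \sigma^2/2)$ for every $u \geq 0$. Optimizing the right-hand side by choosing $t = u/\sigma^2$ produces the Chernoff bound $\mathbb{P}(S \geq u) \leq \exp(-u^2/(2\sigma^2))$. Setting the right-hand side equal to $\delta$ and solving for $u$ gives $u = \sqrt{2\sigma^2 \log(1/\delta)} = \sqrt{2}\sqrt{\sum_i a_i^2}\sqrt{\log(1/\delta)}$, so with probability at least $1-\delta$ we have $\sum_i \sigma_i a_i \leq \sqrt{2}\sqrt{\sum_i a_i^2}\sqrt{\log(1/\delta)}$, matching the claim with $C = \sqrt{2}$.

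There is essentially no obstacle here: this is the classical Hoeffding bound for bounded independent zero-mean variables, and each step is textbook. The only mildly technical point is the inequality $\cosh(x) \leq e^{x^2/2}$, which is a routine Taylor-series comparison. One could alternatively invoke Hoeffding's inequality as a black box applied to $\sigma_i a_i \in [-|a_i|,|a_i|]$, yielding the same bound up to an absolute constant.
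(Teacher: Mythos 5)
Your proof is correct: the MGF bound $\mathbb{E}[\exp(t\sigma_i a_i)]=\cosh(ta_i)\leq \exp(t^2a_i^2/2)$, the Chernoff optimization at $t=u/\sigma^2$, and the inversion yielding $C=\sqrt{2}$ are all standard and carried out without error. The paper itself states this lemma as a folklore fact and gives no proof, so your argument simply supplies the canonical Hoeffding-type derivation that the authors implicitly rely on; there is nothing to reconcile.
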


We are now ready to prove Equation~\eqref{eq:desired_bound} and hence complete the proof of Proposition~\ref{prop:multiarm_gen}. Given a data set $\hat{\mu}_1,\dots,\hat{\mu}_n$ and $\theta \in \Theta$, we let $v_i(\theta) := [\inf_{\lambda}V(\vs,\theta,\lambda,\hat{\mu}_i) - \bE_{\hat{\mu}}\inf_{\lambda}V(\vs,\theta,\lambda,\hat{\mu})]$.
Given a finite set $\hat{\Theta} := \{\theta_1,\dots,\theta_H\}\subseteq \Theta$, from Lemma~\ref{lem:distribution_variation}, we have with probability $1-\delta$, 

$$\sup_{\theta \in \hat{\Theta}}\sup_i|v_{i}(\theta)|\leq \sqrt{\frac{C_{\mathsf{sys}}\log(\frac{nN|\hat{\Theta}|}{\delta})}{N}} =: R(\delta)$$

Therefore, with probability at-least $1-\delta$ over the randomness in $\hat{\mu}_1,\dots,\hat{\mu}_n$, we have:

\begin{align}
&\mathbb{P}\left(\sup_{\theta \in \hat{\Theta}}\sum_{i=1}^{n}\sigma_i v_i(\theta) > t\bigr|\hat{\mu}_1,\dots,\hat{\mu}_n\right) \nonumber \\ &\leq C_1 |\hat{\Theta}|\exp\left(-\frac{C_2t^2}{n R^2(\delta)}\right)\label{eq:hp_bound}
\end{align}

We pick $\hat{\Theta}$ to be an $\epsilon$ net over $\Theta$. By \cite[Corollary 4.2.13]{vershynin2018high}, we can take $|\hat{\Theta}| \leq (\frac{3D}{\epsilon})^d$. Let $f:\Theta \to \hat{\Theta}$ be the map to its nearest element in $\hat{\Theta}$. Now, we have:
\begin{align}
\sup_{\theta \in \Theta} \sum_i v_i(\theta)\sigma_i &= \sup_{\theta \in \Theta} \sum_i v_i(f(\theta))\sigma_i + [v_i(\theta)-v_i(f(\theta))]\sigma_i \nonumber \\
&\leq 2n\epsilon L + \sup_{\theta \in \hat{\Theta}} \sum_i v_i(\hat{\theta})\sigma_i \nonumber 
\end{align}

Combining this with Equation~\eqref{eq:hp_bound}, we conclude that with $1-\delta$ over the randomness in $\hat{\mu}_1,\dots,\hat{\mu}_n$, we have:

\begin{align}&\mathbb{P}\left(\sup_{\theta \in \Theta}\sum_{i=1}^{n}\sigma_i v_i(\theta) > t + 2n\epsilon L\biggr|\hat{\mu}_1,\dots,\hat{\mu}_n\right) \nonumber \\&\leq C_1 |\hat{\Theta}|\exp\left(-\frac{C_2t^2}{n R^2(\delta)}\right)\label{eq:hp_bound_2}
\end{align}

Taking $\epsilon = \frac{1}{n^{\frac{3}{2}}\sqrt{N}}$ and integrating the tails (Lemma~\ref{lem:int_tail}), we conclude that with probability at-least $1-\delta$ (with respect to the randomness in $\hat{\mu}_1,\dots,\hat{\mu}_N$).

$$\bE[\sup_{\theta \in \Theta}\sum_{i=1}^{n}\sigma_i v_i(\theta)|\hat{\mu}_1,\dots,\hat{\mu}_n] \leq C_{\mathsf{sys}}\frac{R(\delta)}{\sqrt{n}}\mathsf{polylog}(Nn)$$

Define the random variable $$X := \bE[\sup_{\theta \in \Theta}\sum_{i=1}^{n}\sigma_i v_i(\theta)|\hat{\mu}_1,\dots,\hat{\mu}_n]$$
Using the definition of $R(\delta)$, we have:

$$\mathbb{P}(X > t) \leq C_1\exp(-\frac{t^2 nN}{C_{\mathsf{sys}}\mathsf{polylog}(Nn)})\,.$$ 

We then apply Lemma~\ref{lem:int_tail} to the equation above to bound $\bE X$ and conclude Equation~\eqref{eq:desired_bound}.

\section{Proof for $\lambda$-network Update Rule and Convergence}
\label{sec:appendix_proofs}

\begin{proof}[Proof of Proposition~\ref{prop:lambda_updating_rule}]
We first consider a simple setting, where the opt-in and opt-out decisions of arms are fixed before training. Taking the derivative of the objective (Eq~\ref{eq:lagrangian_relaxation}) with respect to $\lambda$, we obtain:
\begin{align*}
\frac{B}{1-\beta} - 
\sum_{i=1}^N  \mathbb{E}\left[\sum_{\substack{t\in[H] \\ \text{arm $i$ opts-in at $t$}}} \beta^t c_{i,t} + \sum_{\substack{ t\in[H] \\ \text{arm $i$ opts-out at $t$}}} \beta^t c_{0,t}
\right].
\end{align*}
Now consider the general case that the opt-in and opt-out decisions are updated at each round during the training. We have 
\begin{align*}
&\Lambda_t = \Lambda_{t-1}  - \alpha\left(\frac{B}{1-\beta}\right) \\ 
& + \alpha\left(\sum_{i=1}^N  \mathbb{E}\left[\sum_{t=0}^H \mathbb{I}\{\xi_{i,t} = 1\} \beta^t c_{i,t} + \mathbb{I}\{\xi_{i,t} = 0\}\beta^tc_{0,t}
\right]\right),
\end{align*}
where the expectation is over the random variables $\xi_{i,t}$ and the action chosen by the optimal policy. Rearranging and simplifying the right hand side terms, we obtain the $\lambda$-updating rule. 
\end{proof}

\begin{proof}[Proof of Proposition~\ref{prop:convergence_lambda}]
The proof largely follows the proof of Proposition 2 in Killian \textit{et al.}\shortcite{killian2022restless}.

Since the max of piece-wise linear functions is a convex function, Equation~\ref{eq:lagrangian_relaxation} is convex in $\lambda$. Thus, it suffices to show (1) the gradient estimated using Proposition~\ref{prop:lambda_updating_rule} is accurate and (2) all inputs (states, features, opt-in decisions) are seen infinitely often in the limit. For (1), we note that training the policy network for a sufficient number of epochs under a fixed output of the $\lambda$-network ensures that Q-value estimates are accurate. With accurate Q-functions and corresponding optimal policies, the sampled cumulative sum of action costs is an unbiased estimator of expected cumulative sum of action costs. Critically, for the estimator to be unbiased, we do not strictly enforce the budget constraint during training, as in Killian \textit{et al.}\shortcite{killian2022restless}. In inference, we do strictly enforce the budget constraint. For (2), we note that during training, initial states are uniformly sampled, and opt-in decisions are also sampled from a fixed bernoulli distribution.For arms that newly opt-in, the features are uniformly sampled. Thus, both (1) and (2) are achieved.
\end{proof}
%%%%%%%%%%%%%%%%%%%%%%%%%%%%%%%%%%%%%%%%%%%%%%%%%%%%%%%%%%%%%%%%%%%%%%%%

\end{document}